%% file: colt2026-sample.tex
\documentclass[final]{colt2026} %

\usepackage{style}

\title[Language Identification with Succinct Machine-Independent Traces]{Language Identification with Succinct Machine-Independent Traces}
\usepackage{times}

\coltauthor{%
 \Name{Moses Charikar} \Email{moses@cs.stanford.edu}\\
 \addr Stanford University
 \AND
 \Name{Jon Kleinberg} \Email{kleinberg@cornell.edu}\\
 \addr Cornell University%
 \AND
 \Name{Chirag Pabbaraju} \Email{cpabbara@cs.stanford.edu}\\
 \addr Stanford University%
}

\begin{document}

\maketitle

\begin{abstract}
Motivated by the power of large language models, there has been renewed interest in the Gold-Angluin model of language identification in the limit, with an eye toward variants of the model that might overcome the negative results for its original formulation. Recent papers on this question have proposed looking at {\em computational traces} and {\em annotations} of training strings as a source of additional power for a learner, reflecting empirical regularities such as the way that commented source code is easier to learn from than arbitrary source code, and text annotated with algorithmically generated chain-of-thought tokens can be easier to learn from than the raw text itself. This recent work has shown positive results for language identification in the presence of such computational traces, but the traces in these positive results come from explicit automata-theoretic machine models that generate the language, where the underlying vocabulary of tokens for the traces is very large. In this paper, we address two fundamental issues left open by this line of work: can we achieve positive results with traces that use only a small alphabet, and can we define traces directly from the language itself, without requiring an underlying machine model that generates it? We establish positive results for both of these questions: for an arbitrary collection of languages, we show how to define computational traces that enable identification in the limit, using an alphabet of tokens that is linear in the size of the alphabet that the languages are defined over, and independent of any other properties of the languages. 
\end{abstract}

\begin{keywords}%
  language models, identification in the limit, computational traces
\end{keywords}

\input{introduction}

\input{results}
\input{related}

\input{background}
\input{main-coloring}

\input{state-machines}

\acks{This work was supported by Moses Charikar’s and Gregory Valiant’s Simons Investigator Awards, a Google PhD Fellowship, a Simons Collaboration grant and a grant from the MacArthur Foundation.}

\bibliography{references}

\appendix
\crefalias{section}{appendix} %

\input{appendix}

\end{document}

%% file: introduction.tex
\section{Introduction}
\label{sec:introduction}

The success of large language models has motivated several lines of recent work exploring theoretical models that might provide insight into how they are achieving strong performance from the types of training data and training pipelines that they are built from. One direction this research has taken has been to start from a classical model of Gold on {\em language identification in the limit} \citep{gold1967language}, in which we view the language learning task as a game played between an adversary and an algorithm: the adversary thinks of a secret language $K$ known only to come from a countable list of candidates $L_1, L_2, L_3, ...$; the adversary enumerates the strings of $K$ in an arbitrary order\footnote{A valid enumeration $x_1,x_2,\dots$ of $K$ satisfies: (1) $x_i \in K$, for all $i$, and (2) $\forall x \in K$, there exists $i$ such that $x_i=x$.}; in each step $t$ the algorithm guesses the index $i_t$ of the true language; and the algorithm wins the game (it has {\em identified $K$ in the limit}) if for some time $t^*$, and all $t \geq t^*$, we have $L_{i_t} = K$ --- the algorithm is correct in every step beginning with $t^*$. 

The main results in Gold's framework, beginning with Gold's original theorem about his model, are negative \citep{gold1967language}; for all but highly constrained families of languages (much more restricted than the class of regular languages), the algorithm cannot win this game \citep{angluin1980inductive}. Yet real language models seem much more powerful than what these sweeping negative results suggest, and so the question has turned to finding reasonable variations on this model in which we are able to obtain positive results more consistent with the successes we see in practice. 

\paragraph{Computational traces.} One promising line of recent work has explored the power of {\em annotation} or {\em computational traces} in helping solve the problem of language identification in the limit \citep{papazov25learning,bhattamishra2026automata,peng2026language}. These models are motivated by the observation that training corpora often contains text or sequence data that is enriched by some kind of meta-data, thinking traces, or domain-relevant annotation. This can include cases in which the data itself has this type of supporting information, as in the way that learning from corpora of source code can be more effective if the code is commented, or learning from corpora of mathematics can be more effective if the proofs contain step-by-step arguments. It also includes cases in which the data has been annotated algorithmically, as in the success of {\em chain-of-thought} and related methods \citep{wei2022chain}. This range of examples suggests that learners can be more effective on a corpus of training data if that data is annotated with some kind of computational trace. 

These recent models have begun from the assumption that there is an automata-theoretic underpinning for the candidate languages --- for example, that they are produced by an unknown finite-state machine, pushdown automaton, or Turing machine --- and that the annotation in question is produced from the computational trace of the machine itself. Prior work obtains possibility results with this approach. This line of work leaves open two fundamental classes of questions, however. First, the kind of annotation they describe is fine-grained and expansive: in one case it is represented in terms of the states of the underlying machine \citep{peng2026language}, and in the other case it is represented by the power-set of the vocabulary, with an exponential blow-up as a result \citep{bhattamishra2026automata}. Second, the premise of these models is that annotation is directly tied to the automaton that produced the language; it leaves open the question of how we might usefully annotate for identification when we have no access to the underlying machine models. 

In contrast, the kinds of annotations or computational trace we see in practice tend not to come from an underlying machine model, but tend to be more designed as objects in their own right, with their own vocabularies, and much coarser than the full state set of the machine architecture. For example, in commented code, it is the comments that form the additional tokens helping learn the corpus; and with chain-of-thought, it is the thinking tokens that provide this help. Neither of these corresponds to the state of the underlying machine that produced the training data, to the extent that there is such a machine at all. What would a theory of computational traces look like if we take the tokens in the trace to have this coarse-grained structure, and independent of any specific machine model? Could we still obtain possibility results for language identification in the limit? 

\paragraph{The present work: Compact, machine-independent traces.} In this paper, we develop such a theory, and we show that annotation with a small set of tokens --- independent of the size of the state set of the underlying machine --- is sufficient to achieve language identification in the limit. Moreover, our construction of the annotation works at the level of the language itself, and does not assume knowledge of any underlying machine producing the language. 

We will describe the model in detail in the subsequent sections, but we begin here with a high-level overview of it. 
As a starting point, let's consider how earlier machine-based models defined computational traces in the case of a finite automaton: as the automaton processed the string one symbol at a time, from left to right, the trace reported the state of the automaton at each symbol. This kind of trace has the property if $x = x_1 x_2 \ldots x_n$ is the input string, then the trace annotates each input symbol $x_i$ with an annotation symbol $c_i$ (corresponding to the state of the automaton at that point); and because $c_i$ is the state, it depends only on the prefix $x_1 x_2 \ldots x_i$ up to that point, and not any subsequent symbols. 

We use this as the defining property of our traces: we say that a {\em trace coloring function} $c$ over a finite set of colors $P$ (the ``palette'') is a function that maps finite strings to elements of $P$, and we define the {\em color trace} of a string $x = x_1 x_2 \ldots x_n$ to be the value of $c$ applied to each of its prefixes: 
\begin{align} \trace_c(x) := (c(\eps), c(x_{\le 1}), c(x_{\le 2}), \dots, c(x_{\le n})) 
\end{align} 
where $x_{\le i}$ denotes the prefix $x_1 x_2 \ldots x_i$ and $\eps$ denotes the empty string. A coloring of $x$ by the states of a finite automaton that produces it is one way to produce such a trace coloring, but our definition here makes clear that we can define this object as an annotation scheme for any language, even if we don't know the underlying machine model, or even if there isn't one. (This is the same sense in which comments or thinking tokens can annotate a string in a training corpus even though they don't correspond in any respect to the generative mechanism that originally produced the string.) 

\paragraph{Identification using traces.} We would like to achieve identification in the limit using these traces, and further, %
with small palettes,
so that we do not need to provide annotations using voluminous alphabets like fine-grained state sets. Formally, given an instance of language identification in the limit with candidate languages $L_1, L_2, L_3, \ldots$, we consider a process in which each language $L_i$ is first annotated by its own trace coloring function $c_{L_i}$, and then an adversary enumerates the strings $x \in K$, each as an annotated pair consisting of $x$ and its color trace, $(x, \trace_{c_{K}}(x))$. 

Is this sufficient to enable language identification in the limit, even with no underlying machine model? We show that it is: there is an algorithm $\mcA$ such that for any collection of candidate languages $L_1, L_2, L_3, \ldots$, there exists a choice of trace colorings $c_{L_1}, c_{L_2}, c_{L_3}, \ldots$ for which $\mcA$ is able to achieve identification in the limit when presented with the ordered pairs $(x, \trace_{c_{K}}(x))$ arising from an enumeration of the adversary's language $K$. 

Moreover, the trace colorings required can be constructed using very few colors: for problem instances with an alphabet of size $k$, there is an algorithm that achieves this using trace colorings with palettes of size at most $k + 1$. Even in the case where the languages come from an underlying machine model, this is a bound that is independent of the number of states of these machines, which must grow unboundedly for an infinite family of languages. As such, it is a much more resource-efficient construction of computational traces --- and one that applies much more broadly --- than the computational traces proposed by earlier approaches. 

We prove this result by establishing an exact combinatorial characterization of when a trace coloring permits identification in the limit, and we relax this to a related sufficient condition for identification that we show how to achieve. This sufficient condition has interesting combinatorial interpretations in its own right, and we show matching upper and lower bounds for this condition in the case of languages over a binary alphabet (when $k = 2$): our result shows how to achieve the required condition with $k + 1 = 3$ colors, and we prove that 2 colors are not sufficient. 

We explore further extensions of our result, including the fact that identification in the limit is still possible (using a larger palette) when the trace can be corrupted to a bounded extent; and the fact that when the candidate languages $L_1, L_2, L_3, \ldots$ are regular, we can achieve identification in the limit with trace coloring functions that use only two colors.

%% file: results.tex
\subsection{Main Results}
\label{sec:main-results}

Given a language collection\footnote{All languages in the collection are assumed to be non-empty.} $\mcC$, our objective is to associate every language $L$ in $\mcC$ with a suitable trace coloring function $c_L$, such that when every $x \in L$ is accompanied with $\trace_{c_L}(x)$, the collection $\mcC$ becomes identifiable in the limit. Towards this, as our first contribution, we derive an exact characterization of when identification in the limit is possible with color traces. Our characterization is structurally similar to Angluin's condition \citep{angluin1980inductive} for identification in the limit, but additionally takes into account the color traces seen in the input; the proof is given in \Cref{sec:characterization-appendix}.

\begin{restatable}[Characterization of Identification in the Limit with Color Traces]{theorem}{characterization}
    \label{thm:characterization-identification-with-traces}
    Let $\mcC$ be a countable language collection. Then, $\mcC$ is identifiable in the limit with a color trace given by the trace coloring functions $\{c_L\}_{L \in \mcC}$ if and only if for every language $L \in \mcC$, there exists a finite ``tell-tale'' subset $T_L \subseteq L$, such that for every language $L' \in \mcC$ that is a proper subset of $L$, either (1) $L'$ does not contain $T_L$, or (2) there exists $x \in L'$ such that $\trace_{c_{L'}}(x) \neq \trace_{c_{L}}(x)$.
\end{restatable}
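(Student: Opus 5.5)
We adapt both directions of Angluin's classical argument to the enriched input. The key reformulation is to view an enumeration of $K$ annotated with $\trace_{c_K}$ as an enumeration of the ``annotated language'' $\widehat{K} := \{(x,\trace_{c_K}(x)) : x \in K\}$. A language $L'$ is consistent with a finite annotated sample $S$ exactly when every string of $S$ lies in $L'$ and carries the same trace under $c_{L'}$ as displayed. The condition in the theorem says: for every proper subset $L' \subsetneq L$ in $\mcC$, either $T_L \not\subseteq L'$ or $\widehat{L'} \not\subseteq \widehat{L}$. Equivalently, no $L' \in \mcC$ satisfies both $\widehat{T_L} \subseteq \widehat{L'}$ and $\widehat{L'} \subsetneq \widehat{L}$, where $\widehat{T_L}$ denotes $T_L$ annotated by $c_L$. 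Note that $\widehat{L'} \subseteq \widehat{L}$ forces $L' \subseteq L$, and it forces $L' \subsetneq L$ whenever $L' \ne L$.

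\textbf{Sufficiency.} Given the tell-tales, we run Angluin's algorithm on the annotated stream. Fix an enumeration $L_1, L_2, \dots$ of $\mcC$. At time $t$, let $S_t$ be the annotated strings seen so far. Output the least index $i \le t$ such that:
\begin{itemize}
\item[(a)] $L_i$ is consistent with $S_t$;
\item[(b)] the annotated tell-tale $\widehat{T_{L_i}}$ is contained in $S_t$.
\end{itemize}
This is well defined as a mathematical algorithm, since membership and traces of the finite sets involved are fixed objects; computability is not at issue in Gold's model as used here.

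Let $K = L_j$ with $j$ minimal. After finitely many steps $\widehat{T_K} \subseteq S_t$, and $K$ is always consistent, so the output index is eventually at most $j$. For each $i < j$ with $L_i \neq K$, we show that $L_i$ is eventually rejected, by cases.
\begin{itemize}
\item If $\widehat{K} \not\subseteq \widehat{L_i}$, some annotated string of $K$ is inconsistent with $L_i$. That string eventually appears, so (a) fails forever after.
\item If $\widehat{K} \subsetneq \widehat{L_i}$, then $K \subsetneq L_i$ because $K \ne L_i$. Suppose $\widehat{T_{L_i}} \subseteq S_t \subseteq \widehat{K}$ at some time. Then $T_{L_i} \subseteq K$ and $\widehat{K} \subseteq \widehat{L_i}$, so $K$ is a proper subset of $L_i$ containing $T_{L_i}$ with no trace disagreement. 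This contradicts the tell-tale property of $L_i$. Hence (b) never holds for $L_i$.
\end{itemize}
In both cases $L_i$ is eventually excluded forever, so the output stabilizes at $j$.

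\textbf{Necessity.} Suppose an algorithm $\mcA$ identifies $\mcC$ with traces, and some $L$ has no tell-tale. Then for every finite $T \subseteq L$ there is $L'_T \in \mcC$ with $T \subseteq L'_T \subsetneq L$ whose traces agree with $c_L$ on all of $L'_T$, so $\widehat{L'_T} \subseteq \widehat{L}$. We build a bad annotated enumeration by Gold's locking argument.
\begin{itemize}
\item Fix an enumeration $s_1, s_2, \dots$ of $L$.
\item In stage $n$, with current finite prefix $\sigma$, let $T$ be the set of strings appearing in $\sigma$ together with $s_1,\dots,s_n$.
\item Take $L' = L'_T$ and extend $\sigma$ by annotated strings of $L'$, first listing $T$ and then enumerating $L'$ (repeating if $L'$ is finite), until $\mcA$ outputs an index of $L'$. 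This must happen, because this continuation is a valid annotated enumeration of $L'$: all annotations agree with $c_{L'}$ since $\widehat{L'} \subseteq \widehat{L}$.
\item Once $\mcA$ outputs an index of $L'$, append $s_{n+1}$ with its $c_L$-annotation and proceed to the next stage.
\end{itemize}
The limit is a valid annotated enumeration of $L$, because every $s_n$ appears and all annotations are under $c_L$. But in every stage $\mcA$ outputs some $L' \neq L$, so it fails on $L$ infinitely often, a contradiction.

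\textbf{Main obstacle.} The delicate point is the necessity step. The continuation must be a legitimate input for $L'$, which requires the traces under $c_{L'}$ to coincide with those under $c_L$ on every string of $L'$, not only on $T$. This is precisely why condition (2) is quantified over all $x \in L'$, and it must be checked carefully. Two bookkeeping details also need care: handling finite $L'$ (use repetitions), and ensuring the constructed sequence is consistent with $\mcA$'s time-indexed guesses.
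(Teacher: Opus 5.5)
Your proof is correct and follows the paper's route. For sufficiency you use Angluin's algorithm with a trace-consistency check (your condition (a) bundles the paper's conditions (1) and (3), and your split on whether $\widehat{K}\subseteq\widehat{L_i}$ regroups the paper's cases); for necessity you use a Gold-style locking adversary, differing only in that you insert $s_{n+1}$ after each wrong guess of $L'$ rather than also waiting for $\mcA$ to guess $L$, which slightly streamlines the paper's back-and-forth.
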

The characterization above gives a precise objective %
for constructing trace coloring functions $\{c_L\}_{L \in \mcC}$. Perhaps surprisingly, our next result, which is one of the main structural results of the paper, shows that for every language collection, we can construct trace coloring functions that use a small palette, and have the stronger property that they \textit{always} satisfy requirement (2) above. %

\begin{restatable}[Coloring Lemma]{lemma}{MainColoringLemma}
    \label{lemma:main-coloring-upper-bound}
    Let $\mcC$ be a language collection %
    over a finite alphabet $\Sigma$ of size $k$. There exist trace coloring functions $\{c_L\}_{L \in \mcC}$ mapping to a palette $P$ of size $k+1$, that satisfy the following ``distinguishable coloring condition'': for every $L, L' \in \mcC$ that satisfy $L \subsetneq L'$, there exists $x \in L$ for which
        $\trace_{c_L}(x) \neq \trace_{c_{L'}}(x)$.
\end{restatable}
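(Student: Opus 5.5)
The plan is to give an explicit, collection-independent construction of the colorings. Each $c_L$ will encode, at every prefix, a \emph{canonical completion} of that prefix inside $L$. The palette is $P=\Sigma\cup\{\bot\}$, which has size $k+1$. Fix a total order $\prec$ on $\Sigma^*$ and let $\mathrm{Pref}(L)$ denote the set of prefixes of strings of $L$. For $w\in\mathrm{Pref}(L)$, let $\mathrm{comp}_L(w)$ be the $\prec$-least string of $L$ extending $w$. We need $\prec$ to be a well-order so that this minimum exists; shortlex is the first candidate.

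The coloring is then defined as follows:
\begin{itemize}
\item $c_L(w)=\bot$ if $\mathrm{comp}_L(w)=w$, i.e.\ $w\in L$ is its own canonical completion;
\item $c_L(w)=a$ if $wa$ is a prefix of $\mathrm{comp}_L(w)$;
\item $c_L(w)$ is arbitrary (say $\bot$) if $w\notin\mathrm{Pref}(L)$, since such $w$ never occurs as a prefix of a string in $L$.
\end{itemize}
A useful consistency property follows: if $x=\mathrm{comp}_L(w)$, then $\mathrm{comp}_L(u)=x$ for every $u$ between $w$ and $x$. Hence following the colors from $w$ traces out exactly $x$.

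The core step is a divergence argument. Suppose $L\subsetneq L'$ and there is a prefix $w\in\mathrm{Pref}(L)$ with $z:=\mathrm{comp}_{L'}(w)\neq x:=\mathrm{comp}_L(w)$. Since $L\subseteq L'$ we have $z\prec x$, and $z\notin L$. Let $u$ be the longest common prefix of $x$ and $z$; then $u$ extends $w$.

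By consistency, $\mathrm{comp}_L(u)=x$. For $L'$, every $L'$-completion of $u$ is also an $L'$-completion of $w$, and $z$ extends $u$, so $\mathrm{comp}_{L'}(u)=z$. Now compare the colors at $u$:
\begin{itemize}
\item if $z=u$, then $c_{L'}(u)=\bot$, while $c_L(u)\neq\bot$ because $x\neq u$;
\item if $x=u$, symmetrically $c_L(u)=\bot\neq c_{L'}(u)$;
\item otherwise the next symbols of $x$ and $z$ after $u$ differ by the choice of $u$.
\end{itemize}
So $x\in L$ satisfies $\trace_{c_L}(x)\neq\trace_{c_{L'}}(x)$, which is the distinguishable coloring condition.

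The main obstacle is guaranteeing that such a $w$ exists: some prefix of a string of $L$ whose canonical $L'$-completion leaves $L$. The natural attempt takes $y\in L'\setminus L$ and $w$ its longest prefix in $\mathrm{Pref}(L)$. Under shortlex, however, $\mathrm{comp}_{L'}(w)$ may be some string of $L$ preceding $y$, and then every prefix of every string of $L$ may have $\mathrm{comp}_{L'}=\mathrm{comp}_L$. So the order, or the completion rule, must be chosen to prefer "new" branches.

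My first fix would be to make the choice local rather than global. At a node $w$, the color would point to the child $wa\in\mathrm{Pref}(L)$ with the smallest $a$, or $\bot$ if $w\in L$; this is a leftmost-branch rule. Then at the branching node $w$ above, where $y$ leaves $\mathrm{Pref}(L)$, one would compare the symbol of $y$ with those of $L$'s children. If that still fails, for instance when $L$ already has a smaller child at $w$, I would reserve $\bot$ to mean ``$w\in L$'' and let the symbol colors along a single designated path record, one node at a time, the missing-child sets of the ancestors. That encoding would need the extra $\bot$ color, which is precisely why $k+1$ rather than $k$ colors should be necessary.

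Verifying that one of these rules always yields a distinguishing $w$ is the step I expect to require the real work. The finite-language case needs particular care, since then the paths are short and there is little room to spread information.
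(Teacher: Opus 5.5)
\textbf{There is a genuine gap.} Your divergence step is correct. But it is conditional on the existence of a prefix $w\in\mathrm{Pref}(L)$ with $\mathrm{comp}_{L'}(w)\neq\mathrm{comp}_L(w)$, and that existence claim is the whole content of the lemma. You leave it open, and for the rules you actually propose it is false. Take $\Sigma=\{0,1\}$ and $L=\{0\}\subsetneq L'=\{0,1\}$. Under shortlex completion, and equally under your leftmost-branch rule, the only string $0\in L$ gets the trace $(0,\bot)$ from both $c_L$ and $c_{L'}$.

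The obstruction is not the choice of order. Any coloring in which $c_L(u)$ names the next symbol of one selected completion of $u$ (with $\bot$ meaning $u$ itself is selected) fails, even if the selections are made separately for each language. Let $s=ua\cdots$ and $t=ub\cdots$ with $a\neq b$, and consider $\{s\},\{t\}\subsetneq\{s,t\}$. Along $s$, the colorings of $\{s\}$ and $\{s,t\}$ agree at every prefix except possibly $u$, and likewise along $t$. But $c_{\{s,t\}}(u)\in\{a,b\}$ must coincide with $c_{\{s\}}(u)=a$ or with $c_{\{t\}}(u)=b$, so one of the two inclusions goes undetected. A pointer to a single branch cannot certify that a second branch exists. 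Your fallback of spreading missing-child sets along a designated path is not specified. As you note, it also has nowhere to put that information when the strings of $L$ are short (e.g.\ $L=\{\eps\}$).

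\textbf{The missing idea is to make the coloring \emph{monotone under inclusion} by encoding counts rather than choices.} The paper sets
\[
c_L(x)=\Ind[x\in L]+|\{a\in\Sigma: xa\in\mathrm{Pref}(L)\}| \quad\text{for } x\in\mathrm{Pref}(L),
\]
and $c_L(x)=1$ otherwise.
\begin{itemize}
\item The values lie in $\{1,\dots,k+1\}$, since a prefix not in $L$ has at least one live child.
\item If $L\subseteq L'$, then $c_L(x)\le c_{L'}(x)$ for every $x$, with strict inequality as soon as either summand strictly increases.
\end{itemize}
With this coloring, the witness you already identified works. Fix $z\in L'\setminus L$.
\begin{itemize}
\item If $z\in\mathrm{Pref}(L)$, the acceptance bit jumps at $z$, and $z$ is a prefix of some $zt\in L$.
\item Otherwise, let $w$ be the longest prefix of $z$ in $\mathrm{Pref}(L)$ and write $z=wa\cdots$. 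The child $wa$ lies in $\mathrm{Pref}(L')\setminus\mathrm{Pref}(L)$, so the live-child count jumps at $w$, and $w$ is a prefix of some $wy\in L$.
\end{itemize}
In the three-language example above, $\{s\}$ and $\{t\}$ both get color $1$ at $u$ while $\{s,t\}$ gets color $2$. That is exactly the information a pointer cannot carry.
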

We also show that our trace coloring functions use an optimal palette size for the purposes of satisfying the distinguishable coloring condition over a binary alphabet (\Cref{prop:2-coloring-lower-bound-for-sufficient-condition}): namely, there exists a (finite) language collection over a binary alphabet for which any set of trace coloring functions that satisfies the distinguishable coloring condition necessarily requires three colors.

Given that our trace coloring functions ensure that requirement (2) of the characterizing condition in \Cref{thm:characterization-identification-with-traces} always holds, 
tell-tale sets are not needed and we get our main result: 

\begin{restatable}[Identification in the Limit with $k+1$ Colors]{theorem}{MainTheorem}
    \label{thm:main-theorem}
    Let $\mcC$ be a countable language collection %
    over a finite alphabet $\Sigma$ of size $k$. There exist trace coloring functions $\{c_L\}_{L \in \mcC}$ mapping to a palette $P$ of size $k+1$ that make $\mcC$ identifiable in the limit with color traces.
\end{restatable}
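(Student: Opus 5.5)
This follows by combining the Coloring Lemma (\Cref{lemma:main-coloring-upper-bound}) with the characterization in \Cref{thm:characterization-identification-with-traces}. First, apply \Cref{lemma:main-coloring-upper-bound} to $\mcC$. This gives trace coloring functions $\{c_L\}_{L \in \mcC}$ into a palette $P$ with $|P| = k+1$ that satisfy the distinguishable coloring condition. We keep exactly these functions and show that they meet the condition of \Cref{thm:characterization-identification-with-traces}. That theorem applies because $\mcC$ is countable.

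For each $L \in \mcC$, choose any finite tell-tale set $T_L \subseteq L$; the empty set suffices, or a single string of $L$, which is nonempty by assumption. The choice does not matter. Now fix $L' \in \mcC$ with $L' \subsetneq L$. Applying the distinguishable coloring condition to the pair $L' \subsetneq L$, with $L'$ in the role of the smaller language, yields some $x \in L'$ such that $\trace_{c_{L'}}(x) \neq \trace_{c_L}(x)$. This is exactly requirement (2) of \Cref{thm:characterization-identification-with-traces}, so the characterizing condition holds for every $L$. The ``if'' direction of that theorem then shows that $\mcC$ is identifiable in the limit with color traces produced by $\{c_L\}$, using $k+1$ colors.

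There is no real obstacle in this reduction. The only thing to check is that the orientation of the inclusion matches: the lemma produces a witness $x$ in the \emph{smaller} language, and requirement (2) asks for $x \in L'$ where $L'$ is the proper subset. These agree. All the substantive work lies in proving \Cref{lemma:main-coloring-upper-bound} and the characterization, both of which we take as given here.
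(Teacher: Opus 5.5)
Your proposal is correct and is the paper's argument: the paper obtains this theorem by noting that the colorings from \Cref{lemma:main-coloring-upper-bound} always satisfy requirement (2) of \Cref{thm:characterization-identification-with-traces}, so tell-tale sets are not needed, and the ``if'' direction then gives identification with $k+1$ colors. Your check that the witness $x$ lies in the smaller language $L'$ matches how the paper uses the lemma.
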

In order to specify a trace coloring function for a language, in general, one needs to specify the color that it maps every string in the universe to. However, our trace-coloring functions from \Cref{thm:main-theorem} above have a particularly special structure for \textit{regular language} collections. Concretely, consider a regular language $L$, and let $M$ be any finite automaton that recognizes $L$. Denote its set of states by $Q(M)$. For any state $q$ in $Q(M)$, 
all strings that arrive at this state $q$ (starting from the initial state) are assigned the same color by our state coloring function $c_L$ for language $L$.
Now, associate the state $q$ with this common color.
Then, for any $x \in L$, the sequence of colors associated with the sequence of states it traverses in $M$ would exactly equal %
the color trace generated by our trace coloring function on $x$. Thus, for a regular language, our trace coloring functions $c_L$ can be described very compactly, by simply specifying the unique color associated with every state in any automaton recognizing $L$.

This structural property of our trace coloring functions motivates studying a natural class of trace coloring functions that come from directly coloring the \textit{states} of automata for regular languages --- we denote these as \textit{state coloring functions}. Concretely, a state coloring function $c_M$ maps every state in $Q(M)$ to a palette $P$. Denoting $L(M)$ to be the regular language recognized by $M$, every $x \in L(M)$ has a well-defined \textit{state color trace} associated with it: if $x$ traverses the sequence of states $q_0,\dots,q_{|x|}$ in $M$, then the state color trace is simply 
\begin{align}
    \label{eqn:state-color-trace-def}
    \trace_{c_M}(x)  := (c_{M}(q_0),\dots,c_{M}(q_{|x|})).%
\end{align}
The work of \citet{peng2026language} and \citet{bhattamishra2026automata} can be viewed in this framework (see Section~\ref{sec:related} for a discussion).
For the computational traces used by \citet{peng2026language}, the `color' of a state is the name of the state itself.
Our notion is a coarsening of this state information.

With this definition (\ref{eqn:state-color-trace-def}), %
we can now study the distinguishable coloring condition for any given collection $\mcC=\{M_1,M_2,\dots\}$ of finite automata. By the reasoning above, we can show that state coloring functions derived from our trace coloring functions satisfy the %
condition with $k+1$ colors. Recall that for arbitrary language collections, we could show that our trace coloring functions achieved an optimal palette size for the distinguishable coloring condition only in the case of a binary alphabet. Interestingly, for arbitrary  \textit{automaton} collections, we show that $k+1$ is the optimal palette size achievable by state coloring functions for the distinguishable coloring condition for any alphabet size $k$.

\begin{restatable}[State Coloring Lemma]{lemma}{StateColoringLemma}
    \label{lemma:state-coloring-optimal-bound}
    Let $\mcC$ be an automata collection %
    over an alphabet of size $k$. There exist state coloring functions $\{c_M\}_{M \in \mcC}$ mapping to a palette $P$ of size $k+1$, such that for any $M, M' \in \mcC$ that satisfy $L(M) \subsetneq L(M')$, there exists $x \in L(M)$ for which
        $\trace_{c_M}(x) \neq \trace_{c_{M'}}(x)$.
    Moreover, for any $k$, there exists a collection of automata over an alphabet of size $k$ such that any set of state coloring functions requires $k+1$ colors to satisfy this condition.
\end{restatable}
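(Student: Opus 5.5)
The upper bound follows from the Coloring Lemma (\Cref{lemma:main-coloring-upper-bound}) once we check that its trace coloring functions factor through the states of every automaton. The lower bound needs an explicit collection together with a pigeonhole argument, and that is where I expect the real work to be.

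\emph{Upper bound.} Let $\{c_L\}$ be the trace coloring functions given by \Cref{lemma:main-coloring-upper-bound} for the language collection $\{L(M) : M \in \mcC\}$. The first step is to verify that $c_L(x)$ depends only on the residual language $x^{-1}L = \{y : xy \in L\}$. This is the structural property described in the discussion preceding the lemma; I would check it directly against the construction of $c_L$. Granting it, fix $M\in\mcC$ with $L=L(M)$. Every string that reaches a given state $q$ of $M$ has the same residual. So we may set $c_M(q) := c_L(x)$ for any $x$ reaching $q$, and color unreachable states arbitrarily. For $x\in L(M)$ with state sequence $q_0,\dots,q_{|x|}$, the state reached after reading $x_{\le i}$ is $q_i$, hence $\trace_{c_M}(x) = \trace_{c_L}(x)$. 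Now suppose $L(M)\subsetneq L(M')$. The distinguishable coloring condition for $\{c_L\}$ yields some $x\in L(M)$ with $\trace_{c_{L(M)}}(x)\ne \trace_{c_{L(M')}}(x)$. Since state traces coincide with trace-coloring traces on each language, this gives $\trace_{c_M}(x)\ne\trace_{c_{M'}}(x)$, as required. The palette is unchanged, so it still has size $k+1$.

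\emph{Lower bound.} I would build a finite collection with the following ingredients.
\begin{itemize}
\item A ``top'' automaton $M^\star$ recognizing $\Sigma^*$, with a start state $q_0$ and one state $q_\sigma$ for each letter $\sigma$; reading $\sigma$ from any state moves to $q_\sigma$, and all states are accepting. This automaton remembers the last letter read.
\item For each pair of states $u\ne v$ of $M^\star$, a proper sublanguage $L_{u,v}\subsetneq\Sigma^*$ with a fixed automaton $M_{u,v}$ in the collection.
\end{itemize}
The goal is to show that if $c_{M^\star}(u)=c_{M^\star}(v)$, then every coloring of $M_{u,v}$ reproduces the $M^\star$-trace of each of its strings. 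The intended mechanism is to make $M_{u,v}$ a quotient-like automaton in which $u$ and $v$ are merged, so that its strings can be ``read through'' $M^\star$ with a consistent relabeling.

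If this works, the $k+1$ states of $M^\star$ must receive pairwise distinct colors, which forces $|P|\ge k+1$. As a warm-up I would first handle the simpler sub-case of one-state sublanguage automata: a one-state automaton accepting $(\Sigma\setminus\{\sigma\})^*$ forces $c(q_0)$ to differ from any color that is constant along $(\Sigma\setminus\{\sigma\})^*$-paths in $M^\star$.

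\emph{Main obstacle.} The hard step is the quantifier structure of the lower bound. The colorings of the sublanguage automata are chosen jointly with $c_{M^\star}$, and the choice is in our favor. So a sublanguage automaton only forces a contradiction if, for every coloring of it, some string's trace differs from the $M^\star$-trace. In other words, $M_{u,v}$ must be rigid enough that an equality $c_{M^\star}(u)=c_{M^\star}(v)$ makes a matching coloring of $M_{u,v}$ exist, while $L_{u,v}$ still sits strictly inside $\Sigma^*$.

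I would first try $L_{u,v}$ equal to the strings whose sequence of visited $M^\star$-states avoids $v$ after its first visit to $u$, since equal colors then make it indistinguishable. If a single top automaton cannot force all $\binom{k+1}{2}$ inequalities, I would fall back to chaining several nested automata and inducting on $k$.
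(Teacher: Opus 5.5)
Your upper bound is correct and is essentially the paper's argument. The paper defines $c_M(q)$ directly from whether $q$ is live, whether it is accepting, and how many of its successors are live, and then notes that this agrees with $c_{L(M)}$ along every accepted run; your residual-language formulation is the same observation.

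\textbf{The lower bound has a genuine gap, and the planned mechanism cannot be repaired.} In a lower bound the colorer picks all the $c_M$ (the choice is against you, not in your favor). So $c_{M^\star}(u)=c_{M^\star}(v)$ yields a contradiction only if some pair $L(M)\subsetneq L(M')$ has matching traces on all of $L(M)$ under \emph{every} admissible choice of the other colorings. That a matching coloring of $M_{u,v}$ merely \emph{exists} is harmless: the colorer picks a different one. For example, it can give the start state of $M_{u,v}$ a color different from $c_{M^\star}(q_0)$, which separates every string at position $0$.

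In fact your intended intermediate claim, that every valid coloring is injective on the $k+1$ states of $M^\star$, is false for every collection. Color each automaton $M$ constantly by the number of elements in a longest chain of proper inclusions in the collection ending at $L(M)$. Proper inclusion strictly increases this number, so traces already differ at position $0$, and the condition holds with $M^\star$ monochromatic. The same observation shows that a collection can force at most as many colors as the length of its longest inclusion chain. Hence sublanguages hanging directly below a single top automaton force only two colors (unless the $L_{u,v}$ themselves form a long chain), however the $M_{u,v}$ are designed.

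\textbf{What is missing is a chain of $k+1$ properly nested languages.} Each language's accepted strings should stay in a single state of every automaton above it in the chain. The forced inequalities are then between the start-state colors of \emph{different} automata, not between states of one automaton. The paper takes $M_0,\dots,M_k$ with $L(M_0)=\{\varepsilon\}$ and $L(M_i)=\{1,\dots,i\}^*$. Each $M_i$ has an accepting start state with self-loops on $1,\dots,i$, plus a rejecting sink. For $i<j$, every $x\in L(M_i)$ stays at the start state in both $M_i$ and $M_j$. So both traces are constant, and $c_{M_i}(q_0(M_i))\neq c_{M_j}(q_0(M_j))$ is forced for all $i<j$, giving $k+1$ pairwise distinct colors. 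Your warm-up (constant traces along loops) and your fallback of ``chaining several nested automata'' point in this direction. Note, though, that a one-state DFA cannot recognize $(\Sigma\setminus\{\sigma\})^*$: with a total transition function it accepts either $\Sigma^*$ or nothing, so you need a rejecting sink.
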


\Cref{lemma:state-coloring-optimal-bound} above is primarily about the tightness of $k+1$ colors for state coloring functions to satisfy the distinguishable coloring condition for any given collection of automata. We now return to our main question of identification in the limit with arbitrary trace coloring functions, but for the special class of regular language collections. In particular, we ask: what is the optimal palette size necessary for these collections? Instead of the general-purpose trace coloring functions given by \Cref{thm:main-theorem}, which use $k+1$ colors, can we perhaps use different trace coloring functions that use fewer colors?
Our final result shows that any collection of regular languages, where every language is infinite, is in fact identifiable in the limit using only \textit{two colors!} 

\begin{restatable}[Two Colors Suffice for Identifying Regular Languages]{theorem}{TwoColorsRegularLanguages}
    \label{thm:regular-languages-2-colors}
    Let $\mcC=\{L_1,L_2,\dots\}$ be a countable collection of regular languages, where every $L_i$ is infinite. Then, $\mcC$ is identifiable in the limit with color traces given by trace coloring functions $\{c_L\}_{L \in \mcC}$ that use just two colors.
\end{restatable}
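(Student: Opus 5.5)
The plan is to bypass tell-tale sets and have the trace itself carry an encoding of the target language, using only the fact that each $L\in\mcC$ is infinite. A trace coloring function may be any function of the prefix. So we may let $c_L(w)$ depend only on the length $|w|$. Such a choice is automatically consistent across all strings of $L$ that share a prefix, and it turns the color trace of a string of length $n$ into the first $n+1$ symbols of a fixed binary word attached to $L$.

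\textbf{Construction.} Fix the enumeration $\mcC=\{L_1,L_2,\dots\}$. For each language $L$, let $i(L)$ be the least index $i$ with $L_i=L$. Let $E(L)\in\{0,1\}^*$ be a prefix-free self-delimiting encoding of $i(L)$; for instance, write each bit of $i(L)$ twice and then append the terminator $01$. Define the infinite binary word $\beta_L = E(L)\,000\cdots$ and set $c_L(w) := \beta_L[|w|]$, where positions are indexed from $0$. The palette is $P=\{0,1\}$, and $\trace_{c_L}(x)$ is exactly the length-$(|x|+1)$ prefix of $\beta_L$.

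\textbf{Learner.} At each step $t$, the learner $\mcA$ looks at all annotated pairs $(x,\trace_{c_K}(x))$ seen so far. If some observed trace contains a complete codeword, which is detectable because the code is prefix-free, then $\mcA$ decodes it to an index $j$ and outputs $j$. Otherwise it outputs an arbitrary index, say $1$.

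\textbf{Correctness.} Since $K$ is infinite, it contains a string $x^\star$ with $|x^\star|+1\ge |E(K)|$, and $x^\star$ appears in the enumeration at some finite time $t^*$. Every observed trace is a prefix of the same word $\beta_K$, so any complete codeword detected in it must be $E(K)$ itself, by prefix-freeness. From time $t^*$ on, $\mcA$ therefore outputs $i(K)$ forever, and $L_{i(K)}=K$.

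As a sanity check against \Cref{thm:characterization-identification-with-traces}, we can take $T_L$ to be a single string of $L$ that is long enough to carry $E(L)$. Then for any $L'\subsetneq L$ that contains $T_L$, the two traces on that string differ because $E(L')\ne E(L)$. So condition (2) holds.

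\textbf{Expected obstacle.} The delicate step is making sure the learner never locks onto a wrong partial reading. A non-prefix-free code could be misread from a short trace, and a learner that waits for a ``long enough'' string needs to know what long enough means. The self-delimiting code settles both issues: correctness comes from prefix-freeness, and finiteness of $t^*$ comes from infinitude of $K$. This is exactly where the hypothesis that every $L_i$ is infinite enters. It is also necessary for this approach: a finite language could have all its strings shorter than its code.

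\textbf{Possible refinement.} Regularity is not needed for this argument. If one wants colorings that are compatible with automata, in the spirit of \Cref{lemma:state-coloring-optimal-bound}, I would instead encode a DFA description $\langle M\rangle$ for $L$ rather than an index. The learner then outputs the least $j$ with $L_j=L(M)$. This is computable in the limit only if the $L_j$ are given as automata, in which case language equality of regular languages is decidable. Since the theorem only asks for identifiability in the limit, I would present the index encoding as the proof.
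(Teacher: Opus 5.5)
Your proof is correct. It takes a genuinely different and much more elementary route than the paper.

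\textbf{The paper's route.} It fixes DFAs $M_i$ for the $L_i$ and uses infinitude only through \Cref{lemma:increasing-live-states}, which unrolls a loop so that $M_i$ has at least $i-1$ live, reachable states. It then colors states inductively. Each earlier proper superset $L_j$ with $j<i$ is assigned its own state $q_b$ of $M_i$, reached by a prefix $w_b$ that extends into $L_i$. That state gets the color opposite to that of $\delta_{M_j}(q_0(M_j),w_b)$. This yields the one-sided property $(\star)$, and identification follows from the Angluin-style consistency learner via \Cref{remark:weaker-condition-sufficient}.

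\textbf{Your route.} You make the color depend only on prefix length, so every observed trace is a prefix of a self-delimiting encoding of the least index of $K$. Infinitude is used only to guarantee a string long enough to carry the codeword. The learner simply decodes; read ``contains a complete codeword'' as ``begins with one, parsing from position $0$'', which is exactly what prefix-freeness makes unambiguous. It needs no membership tests, no tell-tales, and makes at most one mind change.

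\textbf{What your route buys.} Regularity is never used, so the argument covers any countable collection of infinite languages over a finite alphabet. This is the trace-coloring form of the generalization mentioned in the paper's conclusion. It also shows plainly that, as formalized, a trace coloring can act as a side channel that names the target outright.

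You also give up less than your ``possible refinement'' suggests. Your $c_L$ is already a state coloring of the product of any DFA for $L$ with a length counter saturating at $|E(L)|$: color state $(q,r)$ by $\beta_L[r]$. So compatibility with automata comes for free, without encoding a DFA description.

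\textbf{What the paper's route buys.} Mainly a conceptual fit. Its colors record discrepancies between a language and its earlier supersets along automaton states. That is closer to the motivation of traces as annotations than to transmitting a label.
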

We note that a palette size of two is optimal above, since using a single color is equivalent to having no trace information, and we know by Gold's results that there are simple regular language collections, where every language is infinite, that are not identifiable in the limit without any traces.

Finally, we also study identification in the limit with traces that are potentially corrupted (\Cref{sec:corruptions}); in this setting, we derive a \textit{robust} version of the coloring lemma, and analogously show optimality of the palette size in the case of state coloring functions for regular languages. 

We note that our primary focus in this paper is information-theoretic. Namely, for our upper bounds for identification, we seek to specify a function that maps inputs and color traces to a guess for the target language. Nevertheless, for implementing the identification algorithm of \Cref{thm:main-theorem}, we would require access to the coloring functions $c_L$ for every language $L$ in the collection, and the ability to determine membership of any string $x$ in any language $L$ in the collection.

%% file: related.tex
\subsection{Related Work}
\label{sec:related}
We discuss two recent works most closely related to our work, that demonstrate the power of annotation or computational traces for language identification.

\cite{peng2026language} take a machine-centric view, studying identification for languages accepted by a particular machine model (deterministic finite automata, deterministic push down automata, and Turing machines).
For a machine $M$ and input $x$, they consider $x$ augmented by a computational trace given by the sequence of computational steps of $M$ on $x$.
In the case of a DFA $D$, the computational trace is simply the sequence of states visited by $D$ while processing input $x$.
In the case of a Turing machine $M$, the computational trace is the sequence of `states' of the Turing machine while processing $x$, where each `state' includes the state of the finite state control, but also the contents of the memory tape, the position of the head, and so on. 
Given an enumeration of inputs accepted by machine $M$, augmented by their computational traces, they show that one can converge to the correct representation of $M$, and hence identify the language accepted by $M$.
We note that this notion of computational traces requires very detailed, fine-grained information to facilitate language identification.
The computational trace for input $x$ can be significantly longer than $x$.
They also study the question of robust identification, i.e., language identification from corrupted traces, establishing results in %
different error regimes.

Another recent work \citep{bhattamishra2026automata} also takes a machine-centric view, studying learnability of regular languages in the Next Symbol Prediction (NSP) setting. Here, the learner receives strings from the language, as well as rich annotation for every prefix of the string: whether the prefix belongs to the language, and which next symbols can lead to a string in the language, i.e., $k+1$ bits for alphabet size $k$.
They show that this annotation is information-theoretically sufficient to identify minimal DFAs; however the problem of improper learning remains computationally intractable.
We note that the annotation considered in this work is closely related to, but significantly richer that the annotation we use in our work. If the input alphabet is of size $k$, each prefix of the input string is annotated with a symbol from an alphabet size of $2^{k+1}$.
In contrast to our work, this work is focussed on regular languages.
Much of their focus is on computationally efficient learning -- an issue that we (mostly) do not study in our work.

%% file: background.tex
\section{Background and Motivating Examples}
\label{sec:background}

Language identification with a color trace can be related to language identification with a \textit{list} \citep*{charikar2025characterization}, where the algorithm guesses a list of $k$ languages at every step, and the algorithm identifies in the limit if the list eventually always contains the target language. In particular, \citet[Theorem 2]{charikar2025characterization} shows that a language collection $\mcC$ is identifiable in the limit with a list of size $k$ if and only if it can be expressed as a union of $k$ collections, \textit{each} of which is identifiable in the limit with a single guess. By this characterization, any collection $\mcC$ that is $k$-list identifiable decomposes as $\mcC = \cup_{i=1}^k \mcC_i$, where each $\mcC_i$ is identifiable. In this case, we can assign a constant color trace corresponding to a unique color $i$ to every language in $\mcC_i$. Then, the color traces seen along with the input reveal the specific $\mcC_i$ that the target language $K$ belongs to. Once this is known, %
one can simply apply any known algorithm that identifies in the limit to the collection $\mcC_i$; since $\mcC_i$ is identifiable in the limit, the algorithm will correctly identify the target language. Thus, $k$-list identifiability always implies identifiability with $k$ colors.

Interestingly, the converse is not true: identifiability with a color trace does not always imply identifiability with a finite-size list. To see this, fix any alphabet $\Sigma$. %
For any finite $F \subseteq \Sigma^*$, let $L_F=\Sigma^* \setminus F$. Now, consider the collection $\mcC = \{L_F: F \text{ is a finite subset of }\Sigma^*\}$. This collection is not identifiable with a list of any finite size (see Remark 3 in \citet{charikar2025characterization}). However, there is a simple trace coloring scheme with just two colors that makes the collection identifiable: associate every $L \in \mcC$ with the trace coloring function $c_L(x)=\Ind[x \in L]$. We call this the ``accept-reject'' coloring. Note then that $c_{L_F}(x)=0$ only if $x \in F$. Thus, an algorithm may initialize $F=\emptyset$, and initialize its guess for the target language $K$ to be $L_{\emptyset}$. Whenever it sees an input string $x$ whose color trace has $0$ in it, it obtains the prefix $w$ of $x$ at which the color $0$ was seen (i.e., $c_K(w)=0$). The algorithm then adds $w$ to $F$, and updates its guess $L_F$ accordingly. If the target language $K$ was $L_{\emptyset}$, the algorithm will never see a $0$ in any color trace, and its initial guess remains correct throughout. If the target language $K$ was $L_F$ for some $F \neq \emptyset$, then for every $w \in F$, the algorithm is guaranteed to see an input $x$ that contains $w$ as its prefix; at this point, the algorithm correctly infers $w$'s membership in $F$. Since $F$ is finite, the algorithm identifies $L_F$ in the limit.

The example above motivates studying the simple accept-reject coloring scheme further. Notice a key property about the collection in the example above: %
for every $L \in \mcC$, if a string $w$ was not in $L$, it could be extended by some suffix $y$ such that $wy \in L$. Indeed, %
this ``reject-extendability'' property is sufficient for a collection to be identifiable in the limit with the accept-reject coloring.

\begin{proposition}[Reject-Extendability Sufficient for Accept-Reject Coloring]
    \label{prop:reject-extendability-sufficiency}
    Let $\mcC$ be a countable language collection over the alphabet $\Sigma$, where every $L \in \mcC$ satisfies the property: for every $w \in \Sigma^*$, if $w \notin L$, then there exists a suffix $y \in \Sigma^*$ such that $wy \in L$. Then, the collection $\mcC$ is identifiable in the limit with a color trace given by the accept-reject trace coloring.
\end{proposition}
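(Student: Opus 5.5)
We verify requirement (2) of the characterization in \Cref{thm:characterization-identification-with-traces} for every proper-subset pair. Then the trivial tell-tale sets $T_L=\emptyset$ suffice, and identification in the limit follows from that theorem. Throughout, every language $L\in\mcC$ is equipped with the accept-reject coloring $c_L(x)=\Ind[x\in L]$.

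Fix $L,L'\in\mcC$ with $L'\subsetneq L$; we must exhibit $x\in L'$ whose color trace under $c_{L'}$ differs from its trace under $c_L$. Since the containment is proper, choose $w\in L\setminus L'$. Because $w\notin L'$, the reject-extendability hypothesis applied to $L'$ gives a suffix $y\in\Sigma^*$ with $x:=wy\in L'$. Now $w$ is a prefix of $x$, namely $x_{\le |w|}=w$. The trace $\trace_{c_{L'}}(x)$ has entry $c_{L'}(w)=0$ at position $|w|$, while $\trace_{c_L}(x)$ has entry $c_L(w)=1$ there. The two traces therefore differ, so condition (2) holds for this pair. (In fact this is exactly the distinguishable coloring condition of \Cref{lemma:main-coloring-upper-bound}, with the roles of the two languages as in that lemma.)

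With condition (2) holding for every proper-subset pair, take $T_L=\emptyset$ for every $L$; the characterization then yields identifiability in the limit. The only subtle point is to apply reject-extendability to the \emph{smaller} language $L'$ rather than to $L$: the witness $x$ must lie in the subset language $L'$, since the adversary enumerating $L'$ is the one that will present it. Only the hypothesis for $L'$ guarantees that $w$ extends to a string of $L'$.

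If one prefers not to invoke the characterization, a direct algorithm also works. At time $t$, guess the least index $i$ such that (a) every string seen so far lies in $L_i$, and (b) every seen trace agrees with $\trace_{c_{L_i}}$. Let $K=L_j$ with $j$ the least index of $K$. For each $i<j$ we need an eventual contradiction:
\begin{itemize}
\item If $K\not\subseteq L_i$, some enumerated string eventually lies outside $L_i$, violating (a).
\item If $K\subsetneq L_i$, the argument above produces $x\in K$ with a mismatching trace, violating (b) once $x$ appears.
\item If $K=L_i$, then $L_i$ is also an index of $K$, contradicting the minimality of $j$; so this case does not arise.
\item If $K\subseteq L_i$ with $K\neq L_i$, this is the proper-subset case already handled.
\end{itemize}
Finitely many $i<j$ are each eliminated after finite time, and $L_j$ itself is never eliminated, so the guess converges to an index of $K$.
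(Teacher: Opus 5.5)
Your proof is correct and matches the paper's. The paper gives exactly your direct smallest-consistent-index algorithm, with the same key step: take $w$ in the larger language but not in the smaller (target) one, apply reject-extendability to the \emph{smaller} language to get $wy$ in it, and observe the trace mismatch $c(w)=0$ versus $c(w)=1$. Your primary route through \Cref{thm:characterization-identification-with-traces} with $T_L=\emptyset$ is only a repackaging of this argument, since that theorem's ``if'' direction runs the same algorithm and the tell-tale check is trivially satisfied when $T_L=\emptyset$.
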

\begin{proof}
    Let $K \in \mcC=\{L_1,L_2,\dots\}$ be the target language. For every $L \in \mcC$, consider the accept-reject coloring given by the trace function $c_L(x)=\Ind[x \in L]$. Consider the algorithm, which at time step $t$, outputs the smallest index $i_t$ such that the language $L_{i_t}$ is consistent with all the information seen in the input so far. Namely, $L_{i_t}$ satisfies that $\{x_1,\dots,x_t\} \subseteq L_{i_t}$, and furthermore, $\trace_{c_K}(x_b)=\trace_{c_{L_{i_t}}}(x_b)$ for all $1 \le b \le t$. Furthermore, for every $j < i_t$, either $\{x_1,\dots,x_t\} \nsubseteq L_{j}$, or $\trace_{c_K}(x_b) \neq \trace_{c_{L_{j}}}(x_b)$ for some $1 \le b \le t$.
    
    Now, let $z$ be the smallest index in the collection for which $L_z=K$. We claim that the index $i_t$ output by the algorithm converges to $z$. %
    To see this, note that the index $z$ is always a valid candidate for the algorithm to output, since every string in the input is from $L_z=K$, and furthermore, %
    $c_{L_z}$ and $c_{K}$ are identical. Now, consider any $j < z$ for which $L_j \nsupseteq L_z$. For each such $L_j$, there exists some $x \in L_z \setminus L_j$ which shows up in the input at some finite time. At this time, the language $L_j$ becomes inconsistent. Since there are finitely many $j < z$, we have that beyond some finite time, every $L_j$ satisfying $j < z$ and $L_j \nsupseteq L_z$ is rendered inconsistent by the algorithm. So, we restrict our attention to languages $L_j$ that satisfy $j < z$ and $L_j \supsetneq L_z$ (note that $L_j$ cannot equal $L_z$, by our choice of $z$). Fix any such $L_j$, and consider any $w \in L_j \setminus L_z$. By the assumed reject-extendability property, there exists some suffix $y \in \Sigma^*$ such that $wy \in L_z$; furthermore, this string $wy$ is guaranteed to show up in the input at some finite time. But now, notice that $c_{K}(w)=c_{L_z}(w)=0$, but $c_{L_j}(w)=1$. Thus, $\trace_{c_{L_j}}(wy) \neq \trace_{c_{K}}(wy)$, and hence, the algorithm will declare $L_j$ to have an inconsistent color trace on $wy$. In this manner, every proper superset of $L_z$ occurring before it gets invalidated at some finite time, and we conclude that the algorithm's guess eventually converges to $z$.
\end{proof}
The sufficient condition above for which the accept-reject coloring works may be deemed to be fairly weak: it only stipulates that every rejected string may be extended in some way to acceptance. Could it not be necessary? Namely, could every language collection be identifiable with the simple accept-reject coloring? The following example shows that this is not the case.

\begin{example}[Accept-Reject Coloring Doesn't Always Work]
    \label{example:accept-reject-failure}
    Consider a collection $\mcC$ over the binary alphabet $\Sigma=\{0,1\}$, which comprises of the regular language $L_\infty=\{0,1\}^*$, together with the regular languages $L_i=\{0,1\}^* \setminus 1^i\{0,1\}^*$ for $i \ge 1$. Namely, $L_i$ excludes strings in $\{0,1\}^*$ that start with %
    $i$ ones. We have that for every $i \ge 1$, $L_i \subsetneq L_{i+1}$, $L_i \subsetneq L_\infty$, and furthermore, $\cup_i L_i = L_\infty$. Observe that for any $i \in \N \cup \{\infty\}$ and $x \in L_i$, the accept-reject coloring satisfies 
        $\trace_{c_{L_i}}(x)=\underbrace{(1,1,1,\dots,1)}_{|x|+1 \text{ times}}$.
    This is because each $L_i$ is \textit{prefix-closed}: every prefix of $x \in L_i$ is also contained in $L_i$. To see this, note that any $x \in L_i$ does not start with $1^i$, and hence no prefix of $x$ starts with $1^i$ as well. In particular, no string ever has 0 in its color trace. Furthermore, for any $i > j$, $\trace_{c_{L_i}}(x)=\trace_{c_{L_j}}(x)$ for every $x \in L_i$.
    
    Now, fix any identification algorithm, and consider an adversary that starts enumerating the strings in $L_1$ in the order that they appear in a fixed background ordering of $L_\infty$, together with their color traces (according to $c_{L_1}$, which is always the constant-ones trace). Then, at some finite time, the algorithm must guess $L_1$ to be the target language. At this time, the adversary switches to enumerating the strings in $L_2$, also in the fixed background ordering of $L_\infty$, and starting from the leftmost string in $L_2$ that has not been enumerated as yet. The adversary continues giving color traces according to $L_2$, which they can legally do, since $L_1 \subsetneq L_2$, and the traces of $L_2$ and $L_1$ matched on all strings in $L_1$.  %
    Now, the algorithm must guess the target language to be $L_2$ at some subsequent finite time, at which time the adversary switches to $L_3$. The adversary repeats this switching process endlessly, enumerating each $L_i$ for a finite phase. As the adversary begins enumerating $L_i$ from the leftmost string that has not yet been enumerated in the fixed background ordering of $L_\infty$, the adversary eventually enumerates every string in $L_\infty$, together also with its correct color trace according to $c_{L_\infty}$. But the algorithm guesses an incorrect language infinitely often, and hence does not identify $L_\infty$ in the limit. %
\end{example}
The structure that enabled the adversarial strategy in \Cref{example:accept-reject-failure} distills precisely into the characterizing condition of \Cref{thm:characterization-identification-with-traces}. Indeed, one direction of the characterization is a generalization of the adversarial strategy employed above. In the other direction, we appropriately modify the algorithm used in \Cref{prop:reject-extendability-sufficiency} to account for trace discrepancies, and tell-tales associated with languages. 

%% file: main-coloring.tex
\section{The Coloring Lemma}
\label{sec:main-coloring}

In this section, we restate and prove our main coloring lemma. We also discuss an extension to the setting where the color traces observed have a bounded number of corruptions.
\MainColoringLemma*
\begin{proof}
    For any language $L$, define the set $\Pref(L)$ to be the set of all strings in $\Sigma^*$ that can be extended to form a string contained in $L$. Namely,
    \begin{align}
        \label{eqn:pref-def}
        \Pref(L) := \{x \in \Sigma^*: \exists t \in \Sigma^* \text{ such that } xt \in L\}.
    \end{align}
    Now, for any $L \in \mcC$, define $\IsAccept_L: \Sigma^* \to \{0,1\}$ and $\NextLive_L:\Sigma^* \to \{0,1,\dots,k\}$ as:
    \begin{align}
        &\IsAccept_L(x) := \Ind[x \in L], \label{eqn:IsAccept-def}\\
        &\NextLive_L(x) := |\{a \in \Sigma: xa \in \Pref(L)\}|. \label{eqn:NextLive-def}
    \end{align}
    In words, $\IsAccept_L(x)$ is the indicator for $x$ being in $L$, whereas $\NextLive_L(x)$ counts the number of letters $a \in \Sigma$ such that $xa$ can be further extended to form a string contained in $L$. Now fix the palette $\palette=\{1,2,\dots,k+1\}$, and define the trace coloring function $c_L:\Sigma^* \to \palette$ as follows:
    
    \begin{align}
        \label{eqn:next-live-trace-coloring-function-def}
        c_L(x) = \begin{cases}
            \IsAccept_L(x) + \NextLive_L(x) & \text{if } x \in \Pref(L),\\
            1 & \text{otherwise.} \\
        \end{cases}
    \end{align}
    If $x \notin \Pref(L)$, $c_L(x)=1$; otherwise, if $x \in \Pref(L)$, then either $\IsAccept_L(x)=1$, or $\NextLive_L(x) > 0$; hence $c_L(x) \ge 1$. Therefore, $c_L(x) \in \{1,2,\dots,k+1\}$ for all $x \in \Sigma^*$. We now observe that the functions $c_L$ satisfy monotonicity on any string $x$ with respect to inclusion. %

    \begin{observation}[Monotonicity of Trace Coloring]
        \label{observation:monotonicity-trace-coloring}
        For any $x \in \Sigma^*$, and $L \subseteq L'$, it holds that
        \begin{align*}
            \text{(1) }\IsAccept_L(x) \le \IsAccept_{L'}(x), \quad \text{(2) }\ \NextLive_L(x) \le \NextLive_{L'}(x), \quad \text{(3) }\ c_L(x) \le c_{L'}(x).
        \end{align*}
    \end{observation}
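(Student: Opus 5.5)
The plan is to check the three inequalities in order, with (3) following from (1) and (2) after a short case split on membership in $\Pref$. The one structural fact used throughout is that prefix-extendability is monotone under inclusion. If $L \subseteq L'$, then $\Pref(L) \subseteq \Pref(L')$, because any witness $t$ with $xt \in L$ also satisfies $xt \in L'$.

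For (1), the claim is immediate. If $\IsAccept_L(x)=1$ then $x \in L \subseteq L'$, so $\IsAccept_{L'}(x)=1$; otherwise the left-hand side is $0$, which bounds any indicator from below. For (2), monotonicity of $\Pref$ gives the set inclusion
\[
\{a \in \Sigma : xa \in \Pref(L)\} \subseteq \{a \in \Sigma : xa \in \Pref(L')\}.
\]
Taking cardinalities yields $\NextLive_L(x) \le \NextLive_{L'}(x)$.

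For (3), I split according to whether $x \in \Pref(L)$.
\begin{itemize}
\item If $x \notin \Pref(L)$, then $c_L(x)=1$. Every color lies in $\{1,\dots,k+1\}$, as verified just before the observation, so $c_{L'}(x) \ge 1 = c_L(x)$.
\item If $x \in \Pref(L)$, then $x \in \Pref(L')$ as well. Both colors are then given by the first branch of \eqref{eqn:next-live-trace-coloring-function-def}, namely $c_L(x) = \IsAccept_L(x)+\NextLive_L(x)$ and likewise for $L'$. Adding (1) and (2) gives $c_L(x) \le c_{L'}(x)$.
\end{itemize}

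The only step that needs any care is the case analysis in (3). The reason is that $c_L$ is not simply the sum $\IsAccept_L+\NextLive_L$ everywhere: off $\Pref(L)$ it is the constant $1$, whereas the raw sum would be $0$. The argument therefore has to observe that $x \notin \Pref(L)$ forces the smallest possible color, so no comparison of the underlying counts is needed there. It must also observe that membership in $\Pref(L)$ transfers to $L'$, so the case where $x \in \Pref(L)$ but $x \notin \Pref(L')$ never arises. Everything else is a direct unpacking of the definitions.
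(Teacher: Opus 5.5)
Your proof is correct. It is the routine verification the paper leaves implicit, since the paper states this observation without proof: monotonicity of $\Pref$ under inclusion gives (1) and (2), and your case split on $x \in \Pref(L)$ correctly accounts for $c_L$ being the constant $1$, not the raw sum, off $\Pref(L)$.
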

    Now, fix any $L, L' \in \mcC$ for which $L \subsetneq L'$, and fix any $z \in L' \setminus L$.

    \paragraph{Case 1: $z \in \Pref(L)$.} In this case, let $t \in \Sigma^*$ be such that $zt \in L$ (note that $t \neq \eps$, since $z \notin L$). We claim that $x := zt$ satisfies $\trace_{c_L}(x) \neq \trace_{c_{L'}}(x)$. To see this, note that $\IsAccept_L(z) = 0$, but $\IsAccept_{L'}(z) = 1$; together with \Cref{observation:monotonicity-trace-coloring}, this implies that $c_L(z) < c_{L'}(z)$.

    \paragraph{Case 2: $z \notin \Pref(L)$.} In this case, let $w \in \Sigma^*$ be the \textit{maximal} prefix of $z$ that satisfies $w \in \Pref(L)$, and let $z=wat$, where $a \in \Sigma$, and $t \in \Sigma^*$. Here, by maximal, we mean that $wa \notin \Pref(L)$. Note that $w$ can be $\eps$, but its existence is guaranteed since $L$ is non-empty. %

    Now, since $w \in \Pref(L)$, there exists $y \in \Sigma^*$ such that $wy \in L$. We claim that $x := wy$ satisfies that $\trace_{c_L}(x) \neq \trace_{c_{L'}}(x)$. To see this, observe that $wa \notin \Pref(L)$, since $w$ was chosen to be the maximal prefix of $z$ satisfying $w \in \Pref(L)$. However, $wa \in \Pref(L')$, since $z=wat \in L' \setminus L$. %
    Thus, $\NextLive_L(w) < \NextLive_{L'}(w)$. By \Cref{observation:monotonicity-trace-coloring}, this implies that $c_L(w) < c_{L'}(w)$.

    In both cases, we have shown the existence of an $x \in L$ satisfying $\trace_{c_L}(x) \neq \trace_{c_{L'}}(x)$.
\end{proof}
The next claim shows that at least for $k=2$, the bound achieved by our construction is tight.

\begin{proposition}[Coloring Lemma Tight for $k=2$]
    \label{prop:2-coloring-lower-bound-for-sufficient-condition}
    There exists a finite collection $\mcC$ of non-empty languages over the binary alphabet $\Sigma=\{0,1\}$, such that any family of trace coloring functions $\{c_L\}_{L \in \mcC}$ that satisfies the distinguishable coloring condition
    must necessarily use $3$ colors.
\end{proposition}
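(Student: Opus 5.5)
The plan is to exhibit a small \emph{finite} collection of finite languages over $\{0,1\}$ in which the strict-inclusion poset is rich enough to force a contradiction under any $2$-coloring. The key simplification is how the distinguishable coloring condition reads for finite languages. For $L \subsetneq L'$ it asks for some $w \in \Pref(L)$, i.e.\ a prefix of some string in $L$, with $c_L(w) \neq c_{L'}(w)$. So each strict inclusion $L \subsetneq L'$ becomes a constraint of the form ``the vectors $(c_L(w))_{w\in \Pref(L)}$ and $(c_{L'}(w))_{w \in \Pref(L)}$ are not identical.'' Colorings on strings outside $\Pref(L)$ are irrelevant for $L$ as the smaller language. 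I will therefore choose languages all of whose strings have length at most $2$, so that every relevant prefix set is a subset of $\{\eps,0,1,00,01,10,11\}$, and the whole question is a finite constraint-satisfaction problem over two colors.

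The forcing mechanism I intend to use is the pigeonhole principle at the root. Put $\{\eps\}$ in $\mcC$. Since $\Pref(\{\eps\}) = \{\eps\}$, every proper superset $L' \supsetneq \{\eps\}$ in $\mcC$ must satisfy $c_{L'}(\eps) \neq c_{\{\eps\}}(\eps)$. With two colors, this forces \emph{all} such supersets to share the same root color. Consequently, any strict inclusion among two such supersets must be witnessed strictly below the root, i.e.\ at some nonempty $w \in \Pref(L)$.

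I then iterate one level down. Take $\{\eps,0\}$, whose relevant prefixes are only $\eps,0$, with the root already tied. All languages in $\mcC$ strictly containing $\{\eps,0\}$ are then forced to share one color at $0$, and symmetrically for $\{\eps,1\}$ at $1$. Finally I add languages that also contain the singletons $\{0\}$ or $\{1\}$, or that extend to depth $2$ (e.g.\ $\{\eps,0,1\}$, $\{\eps,0,00\}$, $\{\eps,0,1,00\}$). The intent is that a language $L$ is eventually sandwiched in a chain $L \subsetneq M \subsetneq N$ where every string of $\Pref(L)$ already has its $M$-color and $N$-color forced equal, contradicting the requirement for $L \subsetneq N$ or $L \subsetneq M$. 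Once a candidate family is fixed, the verification is a case analysis on the two possible root colors of the singletons and the forced colors propagated level by level. I will also check that with $3$ colors the family is consistent, as guaranteed by the Coloring Lemma, which serves as a sanity check.

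The main obstacle is choosing the family. Naive chains such as $\{\eps\} \subsetneq \{\eps,0\} \subsetneq \{\eps,0,00\}$ are always $2$-colorable, because each new inclusion can be witnessed at a fresh deeper prefix. Languages like $\{0\}$, whose prefix set contains $\eps$ but which do not contain $\eps$, have free root colors and escape the pigeonhole step. So the family must simultaneously constrain the root color of such ``$\eps$-free'' languages, via their own proper subsets (e.g.\ $\{0\} \subsetneq \{0,1\}$ together with $\{1\} \subsetneq \{0,1\}$), and tie their deeper colors to those of the $\eps$-containing languages. My first attempt would be the family of all nonempty subsets of $\{\eps,0,1\}$, possibly augmented with a few depth-$2$ strings. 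If a hand search is inconclusive, I would run an exhaustive search over $2$-colorings of this finite instance to locate a minimal unsatisfiable subfamily, and then present that subfamily with a short human-readable propagation argument.
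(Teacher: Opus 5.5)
Your reformulation is right: for $L\subsetneq L'$, the condition says $c_L$ and $c_{L'}$ differ somewhere on $\Pref(L)$. Your two forcing steps, at the root via $\{\eps\}$ and one level down via $\{\eps,0\}$ and $\{\eps,1\}$, are exactly the mechanism of the paper's proof. The gap is that the statement is existential, and you never commit to a collection or derive a contradiction; you defer to a search.

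Moreover, the one concrete candidate you name fails: the family of all nonempty subsets of $\{\eps,0,1\}$ \emph{is} satisfiable with two colors. Use the following coloring, with all remaining values arbitrary:
\begin{itemize}
\item color $\{\eps\}$, $\{0\}$, $\{1\}$ red at $\eps$, and the other four languages blue at $\eps$;
\item color $\{\eps,0,1\}$ blue at $0$ and at $1$;
\item color $\{\eps,0\}$ and $\{0,1\}$ red at $0$, and $\{\eps,1\}$ red at $1$.
\end{itemize}
Every strict inclusion is then witnessed: at $\eps$ when the smaller language is $\{\eps\}$, $\{0\}$ or $\{1\}$, and at $0$ or $1$ otherwise. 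This coloring survives because $\{\eps,0,1\}$ is maximal in the family. The equalities your forcing steps impose on it are therefore never tested against a larger language.

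Relatedly, the contradiction you describe for a chain $L\subsetneq M\subsetneq N$ is misplaced. If $c_M$ and $c_N$ agree on $\Pref(L)$, the requirements for $L\subsetneq M$ and $L\subsetneq N$ become the same requirement, not conflicting ones. The contradiction must come from the pair $M\subsetneq N$ itself, once $c_M$ and $c_N$ are forced to agree on all of $\Pref(M)$; the smaller languages only do the forcing.

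With your steps this is immediate. Take $M=\{\eps,0,1\}$ and any strict superset $N$, e.g.\ $\{\eps,0,1,01\}$ (the paper's choice) or your own $\{\eps,0,1,00\}$. Both strictly contain $\{\eps\}$, $\{\eps,0\}$ and $\{\eps,1\}$. So $c_M$ and $c_N$ agree at $\eps$, $0$ and $1$, which is all of $\Pref(M)$, and no $x\in M$ separates their traces. The five-language collection $\{\eps\},\{\eps,0\},\{\eps,1\},M,N$ is precisely the paper's proof.

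Finally, your concern about $\eps$-free languages such as $\{0\}$ is a red herring. You choose the collection, and adding languages only adds constraints, so you can simply leave them out.
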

\begin{proof}
    Consider the finite collection $\mcC=\{L_1, L_2, L_3, L_4, L_5\}$, where
    \begin{align*}
        L_1 = \{\eps\},\quad L_2 = \{\eps, 0\},\quad L_3 = \{\eps,1\},\quad L_4 = \{\eps, 0, 1\},\quad L_5=\{\eps, 0,1,01\}.
    \end{align*}
    Fix any trace coloring functions $c_{L_1}, c_{L_2}, c_{L_3}, c_{L_4}, c_{L_5}$ that map to the palette $\{\Red,\Blue\}$, and satisfy the distinguishable coloring condition. %

    Since $L_1$ only contains $\eps$, it must hold that $c_{L_1}(\eps) \neq c_{L_2}(\eps)=c_{L_3}(\eps)=c_{L_4}(\eps)=c_{L_5}(\eps)$. Without loss of generality, let $c_{L_1}(\eps)=\Red$, so that $c_{L_2}(\eps)=c_{L_3}(\eps)=c_{L_4}(\eps)=c_{L_5}(\eps)=\Blue$.

    Then, for $L_2$, it must hold that $c_{L_2}(0) \neq c_{L_4}(0)=c_{L_5}(0)$. Similarly, for $L_3$, it must be the case that $c_{L_3}(1) \neq c_{L_4}(1)=c_{L_5}(1)$. Without loss of generality, suppose $c_{L_2}(0)=\Red$ and $c_{L_3}(1)=\Red$, so that $c_{L_4}(0)=c_{L_5}(0)=\Blue$, and $c_{L_4}(1)=c_{L_5}(1)=\Blue$.

    But now, consider the traces of $L_4$. By the previous assignments, we have that
    \begin{alignat*}{2}
        &\trace_{c_{L_4}}(\eps) = (c_{L_4}(\eps)) = (\Blue),\qquad  &&\trace_{c_{L_5}}(\eps) = (c_{L_5}(\eps)) = (\Blue),\\
        &\trace_{c_{L_4}}(0) = (c_{L_4}(\eps), c_{L_4}(0)) = (\Blue, \Blue),\qquad &&\trace_{c_{L_5}}(0) = (c_{L_5}(\eps), c_{L_5}(0)) = (\Blue, \Blue),\\
        &\trace_{c_{L_4}}(1) = (c_{L_4}(\eps), c_{L_4}(1)) = (\Blue, \Blue),\qquad &&\trace_{c_{L_5}}(1) = (c_{L_5}(\eps), c_{L_5}(1)) = (\Blue, \Blue).
    \end{alignat*}
    So, even though $L_4 \subsetneq L_5$, every $x \in L_4$ satisfies $\trace_{c_{L_4}}(x) = \trace_{c_{L_5}}(x)$. This contradicts the trace coloring functions satisfying the distinguishable coloring condition.
\end{proof}
We remark that the lower bound above does not, in general, preclude identifiability in the limit with trace coloring functions that use only 2 colors. Rather, it merely shows that the distinguishable coloring condition may not be achieved for all collections using 2 colors. Indeed, the collection above is finite, and every finite collection is identifiable without any color traces \citep{angluin1980inductive}.

\subsection{Extension to Corrupted Traces}
\label{sec:corruptions}

We also obtain the guarantee of \Cref{lemma:main-coloring-upper-bound} in the setting where the color traces seen may be \textit{corrupted}. Formally, we consider a notion of corrupted traces similar to \citet{peng2026language}, where the trace seen along with an input string $x$ is $\widetilde{\trace}_{c_K}(x)$, a corrupted version of $\trace_{c_K}(x)$. We restrict the nature of allowed corruptions as follows. For any $x$, we require $|\widetilde{\trace}_{c_K}(x)|=|\trace_{c_K}(x)|$ (i.e., the corrupted trace must have the same length as the uncorrupted trace), and that every element in $\widetilde{\trace}_{c_K}(x)$ belongs to the same palette $\palette$ used by $c_K$ (i.e., the corrupted trace cannot introduce colors outside the palette). With these restrictions, we consider traces containing a bounded number of corruptions, as measured by the \textit{Hamming distance} between the corrupted and uncorrupted traces. Namely, if $\trace_{c_K}(x) = (a_1,a_2,\dots,a_n)$ and $\widetilde{\trace}_{c_K}(x) = (b_1,b_2,\dots,b_n)$, we restrict $d(a,b)=|\{i: a_i \neq b_i, 1 \le i \le n\}| \le \ell$, where $\ell > 0$ is an a priori known corruption budget.

In this setting, we wish to correctly identify the target language $K$ in the limit for any target language $K$, and for any enumeration $(x_1, \widetilde{\trace}_{c_K}(x_1)), (x_2, \widetilde{\trace}_{c_K}(x_2)),\dots$ of $K$ with corrupted traces satisfying a corrupted budget $\ell$. From the proof of \Cref{thm:characterization-identification-with-traces}, it is straightforward to verify that a \textit{sufficient} condition for identifiability of a collection $\mcC$ in this setting is the following: for every $L \in \mcC$, there exists a finite tell-tale $T_L \subseteq L$, such that for every $L' \in \mcC$ satisfying $L' \subsetneq L$, either (1) $L'$ does not contain $T_L$, or (2) there exists $x \in L'$ such that $d\left(\trace_{c_{L'}}(x), \trace_{c_{L}}(x)\right) > 2\ell$.

Then, just like \Cref{lemma:main-coloring-upper-bound}, we seek trace coloring functions that always satisfy (2) above. This will guarantee identifiability with corrupted traces having at most $\ell$ corruptions. Indeed, the next lemma constructs trace coloring functions that always satisfy (2) with a palette of size $O(k^{2\ell+1})$.

\begin{restatable}[Robust Coloring Lemma]{lemma}{robustcoloring}
    \label{lemma:corruptions-coloring-upper-bound}
    Let $\mcC$ be a language collection comprising of non-empty languages over a finite alphabet $\Sigma$ of size $k$, and fix $\ell > 0$. There exist trace coloring functions $\{c_L\}_{L \in \mcC}$ mapping to a palette $P$ of size $\max(8\ell+4, 8k^{2\ell+1})$, such that for any $L, L' \in \mcC$ that satisfy $L \subsetneq L'$, and $L$ contains some string $x$ having length at least $2\ell$, there exists $x \in L$ for which
        $d\left(\trace_{c_L}(x), \trace_{c_{L'}}(x)\right) > 2\ell$.
\end{restatable}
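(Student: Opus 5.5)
The plan is to amplify the single-position discrepancy from the proof of \Cref{lemma:main-coloring-upper-bound} into more than $2\ell$ discrepant positions. Let $b_L:\Sigma^*\to\{1,\dots,k+1\}$ denote the coloring from \eqref{eqn:next-live-trace-coloring-function-def}, which I will call the base coloring. That proof gives the following for every $L\subsetneq L'$: there is a string $w\in\Pref(L)$ with $b_L(w)<b_{L'}(w)$, and $w$ is witnessed by an extension $wy\in L$. By \Cref{observation:monotonicity-trace-coloring}, $b_L\le b_{L'}$ pointwise, so for a single $L$ the base coloring is already a monotone summary of local information.

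\emph{Step 1 (a windowed coloring).} The new color $c_L(v)$ of a prefix $v$ records the base colors of the last $2\ell+1$ ancestors of $v$, namely $b_L(v_{\le |v|-j})$ for $0\le j\le 2\ell$. Ancestors that do not exist get a dummy symbol. The dependence on $k$ should be roughly $k^{2\ell+1}$. I would try to compress each $(k+1)$-valued entry to a $k$-ary entry plus a constant number of global flags, which would account for the factor $8$ in the bound.

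With this coloring, a discrepancy of $b$ at $w$ makes $c_L(wy_{\le j})\ne c_{L'}(wy_{\le j})$ for every $0\le j\le\min(2\ell,|y|)$. This gives $\min(2\ell,|y|)+1$ discrepant positions, which exceeds $2\ell$ as soon as $|y|\ge 2\ell$.

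\emph{Step 2 (the long-extension case).} If the witness $w$ from the two cases of \Cref{lemma:main-coloring-upper-bound} admits an extension $wy\in L$ with $|y|\ge 2\ell$, we are done by Step 1. I would first try to choose the witness to maximize the available extension length. The candidates are to take $z\in L'\setminus L$ and walk along the maximal live prefix as in Case 2, or to pick $x=zt$ as in Case 1.

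\emph{Step 3 (short extensions: the main obstacle).} The hard case is when every extension of $w$ inside $L$ is shorter than $2\ell$. Only here is the hypothesis that $L$ contains a string of length at least $2\ell$ used. My plan is to let the color also carry lookahead information, which is legitimate because $c_L$ may depend arbitrarily on $L$.

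For each $v$, I would add a bounded ``depth-to-death'' counter, namely $\min\{|u|: vu\in L\}$ truncated at about $4\ell+2$, together with a parity or phase marker. These counters would supply the $8\ell+4$ term in the palette. Counters of this kind are monotone non-increasing under inclusion. So if $w$ is near a dead end in $L$ but $L'$ continues beyond it, the counters along a long string of $L$ should differ over a stretch of more than $2\ell$ positions.

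Alternatively, one could route the witness through an ancestor of $w$ that lies on a length-$\ge 2\ell$ string of $L$. One would then arrange that the discrepancy at $w$ is reflected at that ancestor's descendants via lookahead summaries of the subtree below it.

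Making this lookahead monotone and simultaneously within the $\max(8\ell+4,8k^{2\ell+1})$ palette budget is the step I expect to need the most care. I would first check it on examples like the one in \Cref{prop:2-coloring-lower-bound-for-sufficient-condition}, with $\ell=1$.
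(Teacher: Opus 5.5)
\textbf{There is a genuine gap, and it sits in your Step 3.}

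Your Steps 1--2 push the base-color discrepancy \emph{forward}, to descendants of the witness $w$. They therefore only help when $w$ has an extension of length at least $2\ell$ inside $L$. Nothing guarantees this, even when $L$ contains long strings. So Step 3 is the entire difficulty, it is left open, and the concrete tools you propose for it do not detect the discrepancy. Take $\ell=1$, $L=\{00,1111\}$, $L'=\{00,000,1111\}$.
\begin{itemize}
\item Among prefixes of strings of $L$, the base colors differ only at $00$ ($1$ versus $2$), and $00$ has no proper extension in $L$. Your windowed coloring is therefore discrepant at exactly one position of the trace of $00$ and at none on $1111$, whereas more than $2$ are needed.
\item The counter $\min\{|u|: vu\in L\}$ equals $2,1,0$ along $00$ for both languages. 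A max-depth-of-live-extension counter would also agree at $\eps$, because the $1111$ branch dominates. Min/max depth summaries are not strictly monotone when a single node is added below $v$.
\item Separately, your window needs on the order of $(k+1)^{2\ell+1}$ colors. For $k=1$ this is $2^{2\ell+1}>8\ell+4$ once $\ell\ge 2$, and the proposed compression is unsupported.
\end{itemize}

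\textbf{The missing idea.} Propagate the discrepancy \emph{backward}, to the $2\ell+1$ ancestors of the witness, which every string of $L$ through the witness must traverse, using lookahead \emph{counts}. Your second alternative in Step 3 points this way but is not carried out. For $v\in\Pref(L)$, let $c_L(v)$ be the sum of four terms:
\begin{itemize}
\item $|\{y:|y|\le 2\ell,\ vy\in L\}|$;
\item $|\{y:1\le|y|\le 2\ell+1,\ vy\in\Pref(L)\}|$;
\item the number of members of $L$ of length at most $2\ell-1$ (a global number);
\item the number of elements of $\Pref(L)$ of length at most $2\ell$ (a global number).
\end{itemize}
Set $c_L(v)=1$ off $\Pref(L)$. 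All four summands are monotone under inclusion, so the sum is strictly larger for $L'$ wherever any one summand is. A sum therefore suffices instead of a tuple. Each summand is at most $\sum_{i=1}^{2\ell+1}k^i\le\max(2\ell+1,2k^{2\ell+1})$, which gives the stated palette.

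\textbf{The case analysis.} If a global number differs, every position of every string of $L$ is discrepant, and the hypothesis that $L$ has a string of length at least $2\ell$ finishes. That is where the hypothesis enters. Otherwise every $z\in L'\setminus L$ has $|z|\ge 2\ell$, and there are two cases.
\begin{itemize}
\item If $z\in\Pref(L)$, each ancestor of $z$ at distance $0,\dots,2\ell$ sees $z$ in its window. So the first count differs at $2\ell+1$ positions of any $zt\in L$.
\item If $z\notin\Pref(L)$, write $z=wat$ with $w$ the maximal prefix lying in $\Pref(L)$. Then $|w|\ge 2\ell$, since otherwise $wa$ would contradict the equal global prefix counts. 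Each ancestor of $w$ at distance $0,\dots,2\ell$ sees $wa\in\Pref(L')\setminus\Pref(L)$. So the second count differs at $2\ell+1$ positions of any $wy\in L$.
\end{itemize}
In the example above, the second count at $\eps,0,00$ is $5,1,0$ for $L$ versus $6,2,1$ for $L'$, giving the three discrepancies your construction misses.
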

The proof of this lemma is given in \Cref{sec:robust-coloring-appendix}. The high-level idea is to propagate the discrepancy at a single location implied by the construction in \Cref{lemma:main-coloring-upper-bound} to all the $2\ell$ locations preceding it.

%% file: state-machines.tex
\section{Traces for Regular Languages}
\label{sec:state-machines}

The trace coloring functions that we constructed in \Cref{sec:main-coloring} satisfy a special property for regular languages. Consider any regular language $L$, and let $M$ be any \textit{Deterministic Finite-State Automaton (DFA)}\footnote{We give a formal definition of a DFA in \Cref{sec:state-traces-appendix}.} that recognizes $L$. Let $Q(M)$ be its set of states, $\delta_M$ be the associated state transition function, $q_0(M)$ be its initial state and $F(M) \subseteq Q(M)$ its accepting states. For any state $q$ in $Q(M)$, let $W_q$ be the set of all the strings that arrive at state $q$, starting from the initial state. %

Recall the definition of our trace coloring function $c_L$ from \eqref{eqn:next-live-trace-coloring-function-def}. We claim that for any state $q$, for any $w, w' \in W_q$, $c_L(w)=c_L(w')$. To see this, note that since $w$ and $w'$ both end up at the state $q$, for any suffix $y$, $wy$ and $w'y$ also end up in the same state in $Q(M)$. This implies that $w \in \Pref(L) \iff w' \in \Pref(L)$, and also that $\IsAccept_L(w)=\IsAccept_L(w'), \NextLive_L(w)=\NextLive_L(w')$. We conclude that $c_L(w)=c_L(w')$.

We now restate and prove \Cref{lemma:state-coloring-optimal-bound}, which constructs state coloring functions using $k+1$ colors so as to satisfy the distinguishable coloring condition, and also show that this palette size is optimal.

\StateColoringLemma*
\begin{proof}
    For any state $q \in Q(M)$, we say that $q$ is a \textit{live} state if there exists a suffix $y \in \Sigma^*$ such that $\delta_{M}(q,y) \in F(M)$; otherwise, we call $q$ a \textit{dead} state. %
    Now, similar to the definitions in \eqref{eqn:IsAccept-def} and \eqref{eqn:NextLive-def}, define $\IsAccept_M:Q(M) \to \{0,1\}$ and $\NextLive_M:Q(M) \to \{0,1,\dots,k\}$ as:
    \begin{align*}
        \IsAccept_M(q) := \Ind[q \in F(M)], %
        \qquad
        \NextLive_M(q) := |\{a \in \Sigma: \delta_M(q,a) \text{ is live}\}|. %
    \end{align*}
    Then, %
    define the state coloring function $c_{M}:Q(M) \to P$ as follows:
    \begin{align}
        \label{eqn:state-coloring-def}
        c_{M}(q) = \begin{cases}
            \IsAccept_M(q) + \NextLive_{M}(q) & \text{if $q$ is live}, \\
            1 & \text{otherwise.}
        \end{cases}
    \end{align}
    We can then verify that for any $x = x_0x_1\dots x_n \in L(M)$ (where $x_0=\eps$) that traverses the states $q_0,q_1,\dots,q_{n}$ in $M$, and for any $q_i$, it holds that
        $c_{M}(q_i) = c_{L(M)}(x_{0}x_1\dots x_i)$,
    where $c_{L(M)}$ is \textit{exactly} the trace coloring function defined in \eqref{eqn:next-live-trace-coloring-function-def} in the proof of \Cref{lemma:main-coloring-upper-bound}. %
    The conclusion of that lemma implies that $\{c_M\}_{M \in \mcC}$ satisfy the required distinguishable coloring condition.

    We now show that $k+1$ colors are optimal. Fix $k \ge 1$, and consider $\mcC=\{M_0, M_1,\dots,M_k\}$. Each $M_i$ has just two states $q_{0}(M_i)$ and $q_1(M_i)$, where $q_0(M_i)$ is the initial state, and also the only accepting state, and $q_1(M_i)$ is a rejecting sink state. Namely, $\forall i \forall a:$ %
    $\delta_{M_i}(q_1(M_i), a)=q_1(M_i)$.
    
    Now, in $M_0$, we further set $\delta_{M_0}(q_0(M_0), a) = q_1(M_0)$ for every $a \in \Sigma$. So, it holds that $L(M_0)=\{\eps\}$. Next, for $i \in \{1,\dots,k\}$, we set
    \begin{align*}
        \delta_{M_i}(q_0(M_i), a) = \begin{cases}
            q_0(M_i) & \text{if $a \in \{1,\dots,i\}$},\\
            q_1(M_i) & \text{otherwise.}
        \end{cases}
    \end{align*}
    Thus, $L(M_i)=\{1,\dots,i\}^*$. Observe that $L(M_0) \subsetneq L(M_1) \subsetneq \dots \subsetneq L(M_k)$.
    
    Now, fix any state coloring functions $c_{M_0}, c_{M_1},\dots,c_{M_k}$ that use a palette of size $k$, and assume that they satisfy: for every $M_i, M_j \in \mcC$ with $L(M_i) \subsetneq L(M_j)$, there exists $x \in L(M_i)$ for which $\trace_{c_{M_i}}(x) \neq \trace_{c_{M_j}}(x)$. Fix any $i,j \in \{0,1,\dots,k\}$ satisfying $i < j$. Observe that any $x \in L(M_i)$ simply loops on the state $q_0(M_i)$, giving $\trace_{c_{M_i}}(x)=\underbrace{(c_{M_i}(q_0(M_i)),\dots,c_{M_i}(q_0(M_i)))}_{|x|+1\text{ times}}$. Thus, for the assumed guarantee to hold, it must be the case that for every $i < j$, $c_{M_i}(q_0(M_i)) \neq c_{M_j}(q_0(M_j))$. But this is not possible if the state coloring functions use only $k$ colors.%
\end{proof}

In \Cref{sec:corrupted-state-traces-appendix}, we extend the construction of the state coloring functions above to the setting with corrupted traces, similar to \Cref{sec:corruptions}. Here, we show that a palette of size $O(k^{2\ell+1})$ suffices to achieve a similar guarantee as \Cref{lemma:corruptions-coloring-upper-bound}, and that $k^{2\ell-1}$ colors are necessary in general. %

Finally, we restate and prove \Cref{thm:regular-languages-2-colors}.
\TwoColorsRegularLanguages*

Before stating the formal proof, we give some intuition, specifically since the requirement that every $L_i$ be infinite might appear counterintuitive at first glance for this positive result. The trace coloring functions that we build interestingly arise via state coloring functions. That is, we carefully choose a sequence of automata $M_1, M_2,\dots$ that accept $L_1,L_2,\dots$, prescribe state coloring functions for these automata, and then directly argue that they achieve the goal of identification with just two colors. This is in contrast to \Cref{lemma:state-coloring-optimal-bound}, where the automata were fixed and provided to us, and we were concerned with the (stronger) distinguishable coloring condition; here, we choose suitable automata as a tool for constructing trace coloring functions for identification.

While there are several automata that recognize $L_i$, we want to choose $M_i$ with a view to satisfying the following property: for each previous automaton $M_j$ where $j < i$ and $L_j$ is a superset of $L_i$, we can designate a particular state in $M_i$ with a particular color, which helps distinguish it from $M_j$. In order to choose such a designated state for every $j < i$, we need enough states (at least $i-1$) in $L_i$ to begin with. A sufficient condition to ensure this is that every $L_i$ is infinite (intuitively, since there are long enough strings in the language, these must cause a loop, which can be ``disentangled'' to enlarge the state space). This ensures that with just two colors, we can distinguish the color traces of every language $L_i$ from the color traces of every superset of $L_i$ that appears before $L_i$. Note that this is a weaker property than the distinguishable coloring condition (which requires such a property for \textit{all} supersets in the collection, not just the ones appearing before), but is sufficient for identification in the limit (see \Cref{remark:weaker-condition-sufficient}).

We now give the formal proof.

\begin{proof}
    Fix $M_1,M_2,\dots$ to be any arbitrary DFAs that recognize $L_1,L_2,\dots$. %
    Now recall the definition of a \textit{live} state in a DFA: a state $q \in Q(M)$ is live if $\exists$ a suffix $y \in \Sigma^*$ such that $\delta_M(q,y) \in F(M)$. Now call a state $q$ \textit{reachable} if $\exists w \in \Sigma^*$ such that $\delta_M(q_0(M), w)=q$. By \Cref{lemma:increasing-live-states} proved in \Cref{sec:state-traces-appendix}, we can assume that every $M_i$ has at least $i-1$ live and reachable states. We will now inductively determine a binary state coloring function $c_{M_i}$ for every $i \ge 1$. %

    To begin with, fix $c_{M_1}$ arbitrarily. Now, for any $i > 1$, let us assume that $c_{M_1},\dots,c_{M_{i-1}}$ have been determined. Consider $M_i$: there are at least $i-1$ live and reachable states in $M_i$. Pick any $i-1$ of these, and denote the set as $\{q_b\}_{b=1,\dots,i-1}$. Associate each $q_b$ with a prefix $w_b \in \Sigma^*$ and a suffix $y_b \in \Sigma^*$, such that $\delta_{M_i}(q_0(M_i), w_b)=q_b$, and $\delta_{M_i}(q_b, y_b) \in F(M_i)$, so that $x_b=w_by_b \in L_i$. Now, consider the set $\{j < i: L_j \supsetneq L_i\}$: this set has size at most $i-1$. Thus, we can map every $j$ in this set to a distinct live $q_b$ in $M_i$. Consider traversing the prefix $w_b$ associated with $q_b$ in $M_j$, and let $q'_b=\delta_{M_j}(q_0(M_j), w_b)$. We will set $c_{M_i}(q_b)=1-c_{M_j}(q'_b)$. After doing this for every $j$, we assign $c_{M_i}(q)$ for the remaining states in $M_i$ arbitrarily. This completes the specification of $c_{M_i}(q)$.

    Now consider any $i \ge 1$. We claim that: $(\star)$ for every $j < i$, if $L_{j} \supsetneq L_i$, then there exists $x$ in $L_i$ for which $\trace_{c_{M_i}}(x) \neq \trace_{c_{M_j}}(x)$. Indeed, fix any such $L_j$, and let $q_b$ be the distinct live state in $M_i$ that we had associated $L_j$ with in the inductive construction of $c_{M_i}$ above. Now consider the string $x_b=w_b y_b \in L_i$ associated with the state $q_b$ in $M_i$. Note that the prefix $w_b$ reaches the state $q'_b = \delta_{M_j}(q_0(M_j), w_b)$ in $M_j$, whereas it reaches the state $q_b$ in $M_i$; our construction above guarantees that $c_{M_i}(q_b)=1-c_{M_j}(q'_b)$. Thus, $\trace_{c_{M_i}}(x_b) \neq \trace_{c_{M_j}}(x_b)$, as claimed.
    
    Finally, by \Cref{remark:weaker-condition-sufficient} in the proof of \Cref{thm:characterization-identification-with-traces}, the property $(\star)$ is sufficient for the algorithm considered in that proof to identify the collection in the limit with traces.
\end{proof}

\section{Conclusion and Discussion}
\label{sec:conclusion}

We studied the problem of language identification with coarse-grained annotations, independent of any underlying machine implementation of the target language. Our main result shows that every countable language collection can be identified in the limit with annotations the same size as the input strings, 
with alphabet size one more than the input alphabet size.

One open problem is to prove a lower bound on the number of colors required for language identification with a color trace. For regular language collections, our result shows that two colors always suffice, independent of their alphabet size $k$. Do constantly many colors always suffice for any arbitrary collection? Or must the number of colors grow linearly with $k$?
Another open problem is to establish a lower bound on the number of colors required to achieve the distinguishable coloring condition --- going beyond our lower bound of 3 colors seems challenging.

Regarding the question about the number of colors necessary for identification, very recently, we have been able to show that the result of \Cref{thm:regular-languages-2-colors}, that trace coloring functions with just two colors suffice for identifying collections of regular languages that are all infinite, can in fact be generalized to collections of \textit{arbitrary} languages that are all infinite. In fact, while trace coloring functions color every character in each string, this strengthened result can even be obtained with ``terminal'' coloring functions, where one only needs to assign a color to the entire string at once.

However, the terminal coloring functions for each language in this 2-coloring result are \textit{collection-dependent} (as are the trace coloring functions from \Cref{thm:regular-languages-2-colors}). Note, in contrast, that our general-purpose trace-coloring functions from \Cref{thm:main-theorem} which use $k+1$ colors, are \textit{collection-independent}. This raises the natural question: do there exist collection-independent terminal coloring functions that use only two colors, and suffice for identification in the limit for all countable language collections (possibly with the restriction that every language is infinite)?

It turns out that the answer to this question is yes; however, the required 2-coloring functions turn out to be highly non-constructive (and, we can prove, necessarily so in a precise sense), involving reasoning with uncountable ordinals and transfinite recursion. As a result, this new result is in a sense not directly comparable to the general trace coloring result in \Cref{thm:main-theorem} of the present paper: the new result is quantitatively stronger (using 2 colors instead of $k+1$), but it markedly lacks the constructive, natural formulation of \Cref{thm:main-theorem}. Because the premise, techniques and analyses used to obtain these results largely depart from the present results in this paper, we defer them to a different manuscript \citep{charikar2026globally}. 

%% file: appendix.tex
\section{Omitted Proofs from \Cref{sec:main-coloring}}
\label{sec:omitted-proofs}

\subsection{Characterization of Identification in the Limit with Color Traces}
\label{sec:characterization-appendix}

\characterization*
\begin{proof}
    \paragraph{If direction.} Suppose that the stated condition holds for the collection $\mcC=\{L_1,L_2,\dots\}$ and the associated trace coloring functions $\{c_L\}_{L \in \mcC}$. Furthermore, let $T_L$ be the finite tell-tale subset associated with language $L$ in the collection. Denote the unknown target language by $K$, and consider the following identification algorithm: at time step $t$, upon seeing $(x_1,\trace_{c_K}(x_1)), \dots, (x_t,\trace_{c_K}(x_t))$, the algorithm outputs the smallest index $i_t$, which satisfies: (1) $\{x_1,\dots,x_t\} \subseteq L_{i_t}$, (2) $T_{L_{i_t}} \subseteq \{x_1,\dots,x_t\}$, and (3) $\trace_{c_{L_{i_t}}}(x_j) = \trace_{c_K}(x_j)$ for every $1 \le j \le t$. We remark that without the check in (3), the algorithm would be identical to Angluin's algorithm for identification in the limit \citep{angluin1980inductive}.

    Let $z$ be any index in the collection for which $L_z=K$, and $\trace_{c_{L_z}}(x)=\trace_{c_{K}}(x)$ for every $x \in K$. Note that such an index exists since $K \in \mcC$. %
    Note also that requirements (1) and (3) hold for index $z$ at all time steps. Furthermore, since $T_{L_z}$ is a finite subset of $K$, and the input eventually enumerates every string in $K$, there exists a finite time at which requirement (2) gets satisfied as well. Thus, beyond this time, $z$ is a valid candidate index for the algorithm to output. We proceed assuming that we are beyond this time.

    Now, consider any $j < z$ for which $L_j \nsupseteq L_z$. For each such $L_j$, there exists some $x \in L_z \setminus L_j$ which is guaranteed to show up in the input at some finite time. At this time and beyond, index $j$ violates (1). Since there are only finitely many $j < z$, we have that beyond some finite time, every $j$ satisfying $j < z$ and $L_j \nsupseteq L_z$ violates (1). So, we assume being beyond this time as well, and next restrict our attention to indices $j$ that satisfy $j < z$ and $L_j \supsetneq L_z$. 
    Suppose in one case that $T_{L_j} \subseteq L_z$ holds. Then, at some finite time, all of $T_{L_j}$ will show up in the input, and hence (2) will always be satisfied beyond this time by index $j$. But now observe that we are in a position where $L_z$ and $L_j$ satisfy: $L_z \subsetneq L_j$ and $T_{L_j} \subseteq L_z$. By the condition in the theorem, it must then hold that there exists $x \in L_z$ for which $\trace_{c_{L_z}}(x)=\trace_{c_K}(x) \neq \trace_{c_{L_j}}(x)$. Such an $x$ is guaranteed to show up in the input at some finite time, together with $\trace_{c_K}(x)$; at this time and beyond, index $j$ violates (3). In the other case, suppose that $T_{L_j} \nsubseteq L_z$. In this case, at least one element in $T_{L_j}$ will never show up in the input, and hence (2) will never be satisfied by index $j$. In all cases, we have shown that there exists a finite time beyond which index $j$ stops being a candidate for the algorithm. Since there are only finitely many $j < z$, we conclude that the algorithm eventually always outputs some $j \le z$ for which $L_j = L_z = K$.

    \begin{remark}[Weaker Condition Sufficient]
        \label{remark:weaker-condition-sufficient}
        We remark that the analysis of the algorithm requires only the following weaker condition on the the countable collection $\mcC=\{L_1,L_2,\dots\}$ and trace coloring functions $\{c_{L_j}\}_{j \in \N}$: for every language $L_j \in \mcC$, there exists a finite tell-tale subset $T_{L_j} \subseteq L_j$ such that for every language $L_i \in \mcC$ that is a proper subset of $L_j$ and $i > j$, either (1) $L_i$ does not contain $T_{L_j}$, or (2) there exists $x \in L_i$ such that $\trace_{c_{L_i}}(x) \neq \trace_{c_{L_j}}(x)$.
    \end{remark}

    \paragraph{Only if direction.} Fix a set of trace coloring functions $\{c_L\}_{L \in \mcC}$, and suppose that there exists a language $L \in \mcC$ such that for every finite $T \subseteq L$, there exists a language $L' \in \mcC$ that is a proper subset of $L$, such that $L'$ contains $T$  and also $\trace_{c_{L'}}(x)=\trace_{c_L}(x)$ for every $x \in L'$. We will show that the collection is not identifiable in the limit with color traces given by these trace coloring functions.

    Towards this, fix any identification algorithm, and consider an adversary that starts enumerating the strings in $L$, together with their color traces according to $c_{L}$, in the order that they appear in a fixed background ordering of $L$. Then, at some finite time, the algorithm must guess an index $i$ such that $L_i=L$. Suppose that the input strings enumerated so far by the adversary comprise of the set $T$. Then, by assumption, there exists a language $L' \in \mcC$ that is a proper subset of $L$, such that $L'$ contains $T$, and furthermore, $\trace_{c_{L'}}(x)=\trace_{c_L}(x)$ for every $x \in L'$ (including $T$). Thus, the adversary switches to enumerating strings in $L'$ with their color traces (which they can legally do, since $T \subseteq L'$, and the traces of $L'$ and $L$ match on all strings in $L'$). Now, the algorithm must guess an index $i$ satisfying $L_i=L'$ at some subsequent finite time. At this time, the adversary switches back to enumerating $L$ (they can do this since traces of $L'$ and $L$ match at all strings in $L'$), starting from the leftmost string that has not yet been enumerated in the fixed background ordering of $L$. Again, at some subsequent finite time, the algorithm must guess an index $i$ such that $L_i=L$. Suppose that the input strings enumerated so far by the adversary comprise of the set $T'$. Then, by assumption again, there exists a language $L'' \in \mcC$ that is a proper subset of $L$, such that $L''$ contains $T'$, and furthermore, $\trace_{c_{L''}}(x)=\trace_{c_L}(x)$ for every $x \in L''$ (including $T'$). Thus, the adversary switches to enumerating strings in $L''$. The adversary thus continues switching back and forth between $L$ and some other language $\tilde{L} \subsetneq L$ endlessly. Since at each switch back to $L$, the adversary begins with the leftmost string in $L$ that has not yet been enumerated in its fixed background ordering, the adversary eventually enumerates every string in $L$ along with its trace according to $c_L$. But also, each switch back to $L$ follows a time step at which the algorithm guesses an index $i$ for which $L_i = \tilde{L} \subsetneq L$, and hence the algorithm guesses an incorrect index infinitely often. Thus, the algorithm does not identify $L$ in the limit.
\end{proof}

\subsection{Robust Coloring Lemma}
\label{sec:robust-coloring-appendix}

\robustcoloring*
\begin{proof}
    For any language $L \in \mcC$, we extend the definitions of $\IsAccept_L$ and $\NextLive_L$ from the proof of \Cref{lemma:main-coloring-upper-bound}, and also consider some global quantities for the language. Namely, consider the functions $\IsAccept_L$ and $\NextLive_L$ that map $\Sigma^*$ to a non-negative integer, and the quantities $\ShortAccept_L$ and 
    $\ShortPrefix_L$, defined as follows:
    \begin{align}
        &\IsAccept_L(x) := |\{y \in \Sigma^{\{0,\dots,2\ell\}}: xy \in L\}|, \label{eqn:IsAccept-corruptions-def}\\
        &\NextLive_L(x) := |\{y \in \Sigma^{\{1,\dots,2\ell+1\}}: xy \in \Pref(L)\}|, \label{eqn:NextLive-corruptions-def} \\
        &\ShortAccept_L := |\{y \in \Sigma^{\{0,\dots,2\ell-1\}}: y \in L\}|, \label{eqn:ShortAccept-corruptions-def} \\
        &\ShortPrefix_L := |\{y \in \Sigma^{\{0,\dots,2\ell\}}: y \in \Pref(L)\}|. \label{eqn:ShortPrefix-corruptions-def}
    \end{align}
    Here, $\Sigma^{\{i,i+1,\dots,j}\}$ denotes the set $\{y \in \Sigma^*:i \le |y| \le j\}$, with $|y|=0$ implying the empty string $y=\eps$.

    In words, $\IsAccept_L(x)$ counts the number of suffixes $y$ having length $|y| \in \{0,1,\dots,2\ell\}$ for which $xy$ is in $L$, $\NextLive_L(x)$ counts the number of suffixes $y$ having length $|y| \in \{1,\dots,2\ell+1\}$ such that $xy$ can be extended further to form a string in $L$, $\ShortAccept_L$ counts the number of strings $y$ having length $|y| \in \{0,\dots,2\ell-1\}$ such that $y$ belongs to $L$, 
    and $\ShortPrefix_L$ counts the number of prefixes $y$ having length $|y| \in \{0,\dots,2\ell\}$ such that $y$ can be extended further to form a string in $L$. Note that $\ShortAccept_L$ and
    $\ShortPrefix_L$ are global properties of the language $L$. Furthermore, for any $x$,
    \begin{align}
        0 \le \IsAccept_L(x), \NextLive_L(x), \ShortAccept_L, \ShortPrefix_L \le \sum_{i=1}^{2\ell+1} k^i \le \max(2\ell+1, 2k^{2\ell+1}). \label{eqn:bound-palette-size-corruptions}
    \end{align}

    Now fix the palette $\palette=\{1,2,\dots,B\}$, where $B=\max(8\ell+4, 8k^{2\ell+1})$, and define the trace coloring function $c_L:\Sigma^* \to \palette$ as follows:
    \begin{align}
        \label{eqn:next-live-trace-coloring-function-corruptions-def}
        c_L(x) = \begin{cases}
            \IsAccept_L(x) + \NextLive_L(x) + \ShortAccept_L 
            + \ShortPrefix_L & \text{if } x \in \Pref(L),\\
            1 & \text{otherwise.} \\
        \end{cases}
    \end{align}

    If $x \notin \Pref(L)$, $c_L(x)=1$; otherwise, if $x \in \Pref(L)$, then either $\IsAccept_L(x) > 0$, or $\NextLive_L(x) > 0$, and hence $c_L(x) \ge 1$. Together with \eqref{eqn:bound-palette-size-corruptions}, this implies that $c_L(x) \in P$ for all $x \in \Sigma^*$. Similar to the proof of \Cref{lemma:main-coloring-upper-bound}, we now observe that the quantities defined above satisfy monotonicity on any string $x$ with respect to inclusion.

    \begin{observation}[Monotonicity of Trace Coloring]
        \label{observation:monotonicity-trace-coloring-corruptions}
        Let $L,L'$ be languages satisfying $L \subseteq L'$. Then, for any $x \in \Sigma^*$, it holds that:
        \begin{align*}
        &\text{(1) }\IsAccept_L(x) \le \IsAccept_{L'}(x), \quad \text{(2) }\NextLive_L(x) \le \NextLive_{L'}(x), \\
        &\text{(3) }\ShortAccept_L \le \ShortAccept_{L'}, \quad \text{(4) }\ShortPrefix_L \le \ShortPrefix_{L'}, \quad \text{(5) }c_L(x) \le c_{L'}(x).
        \end{align*}
    \end{observation}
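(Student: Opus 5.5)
The plan is to reduce all five claims to one basic fact, $\Pref(L) \subseteq \Pref(L')$ whenever $L \subseteq L'$. This holds because if $xt \in L$ for some $t$, then $xt \in L'$ as well. Every quantity in the definitions above counts the members of a set of strings cut out by a membership condition in $L$ or in $\Pref(L)$, so each will be monotone under enlarging the language.

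For (1)--(4) I will show that each counted set for $L$ is contained in the corresponding set for $L'$, and then compare cardinalities.
\begin{itemize}
\item For (1), if $y \in \Sigma^{\{0,\dots,2\ell\}}$ and $xy \in L$, then $xy \in L'$, so the set defining $\IsAccept_L(x)$ lies inside the one defining $\IsAccept_{L'}(x)$.
\item For (2), if $xy \in \Pref(L)$, then $xy \in \Pref(L')$ by the basic fact.
\item For (3), the argument is the same as (1) with $x=\eps$ and the length range $\{0,\dots,2\ell-1\}$.
\item For (4), the argument is the same as (2) with $x = \eps$.
\end{itemize}
All the sets involved are finite subsets of $\Sigma^{\{0,\dots,2\ell+1\}}$, so inclusion gives the inequality of sizes directly.

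For (5) I will split into cases according to the piecewise definition in \eqref{eqn:next-live-trace-coloring-function-corruptions-def}.
\begin{itemize}
\item If $x \notin \Pref(L)$, then $c_L(x)=1$. Since $c_{L'}$ takes values in $\{1,\dots,B\}$, as already verified right after the definition, we get $c_{L'}(x) \ge 1 = c_L(x)$.
\item If $x \in \Pref(L)$, then $x \in \Pref(L')$ by the basic fact, so both colors are given by the four-term sum. Summing (1)--(4) termwise gives $c_L(x) \le c_{L'}(x)$.
\end{itemize}

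There is no real obstacle here. The only point needing care is the first case of (5): there $L'$ may or may not have $x$ as a prefix, and the argument relies on every color of $c_{L'}$ being at least $1$. That lower bound was checked when the palette was fixed, and it uses the fact that for $x \in \Pref(L')$ either $\IsAccept_{L'}(x)>0$ or $\NextLive_{L'}(x)>0$.
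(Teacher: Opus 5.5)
Your proof is correct and matches the paper's reasoning. The paper states this observation without proof, treating it as immediate from the fact that $L \subseteq L'$ implies $\Pref(L) \subseteq \Pref(L')$, which it invokes again in Case 2. Your case split for (5) correctly handles the one point that is not pure set inclusion: when $x \notin \Pref(L)$, it relies on $c_{L'}(x) \ge 1$.
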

     Now, fix any $L, L' \in \mcC$ for which $L \subsetneq L'$, and $L$ contains some string $x$ having length at least $2\ell$.

     \paragraph{Case 1: $\ShortAccept_L < \ShortAccept_{L'}$ or $\ShortPrefix_L < \ShortPrefix_{L'}$.} In this case, using \Cref{observation:monotonicity-trace-coloring-corruptions}, note that $c_L(x) < c_{L'}(x)$ for every $x \in L$. Then, choose any $x \in L$ having length at least $2\ell$, which is guaranteed to exist by assumption. We have that each of $c_L(\eps), c_L(x_{\le 1}),\dots, c_L(x)$ is strictly smaller than $c_{L'}(\eps), c_{L'}(x_{\le 1}),\dots, c_{L'}(x)$ respectively, and hence, $d\left(\trace_{c_L}(x), \trace_{c_{L'}}(x)\right) > 2\ell$ as required.

     \paragraph{Case 2: $\ShortAccept_L = \ShortAccept_{L'}$ and $\ShortPrefix_L = \ShortPrefix_{L'}$.} Since $L \subsetneq L'$, $y \in L \implies y \in L'$, and $y \in \Pref(L) \implies y \in \Pref(L')$. Therefore, this case implies the stronger properties that 
     \begin{align}
        &\{y \in \Sigma^{\{0,\dots,2\ell-1\}}: y \in L\}=\{y \in \Sigma^{\{0,\dots,2\ell-1\}}: y \in L'\}, \label{eqn:stronger-property-1}\\
        &\{y \in \Sigma^{\{0,\dots,2\ell\}}: y \in \Pref(L)\}=\{y \in \Sigma^{\{0,\dots,2\ell\}}: y \in \Pref(L')\}. \label{eqn:stronger-property-2}
    \end{align}
    Now, fix any $z \in L' \setminus L$. Note that by \eqref{eqn:stronger-property-1}, it must be the case that $|z| \ge 2\ell$. We then have the following two subcases:
    \paragraph{Subcase 2a: $z \in \Pref(L)$.} In this case, let $t \in \Sigma^*$ be such that $zt \in L$ (note that $t \neq \eps$, since $z \notin L$). We claim that $x := zt$ satisfies that $d\left(\trace_{c_L}(x), \trace_{c_{L'}}(x)\right) > 2\ell$. To see this, let $z=z_0z_1,\dots,z_n$, where $z_0=\eps$ and $n \ge 2\ell$. For any $p \le q$, denote $z_{p:q} := z_{p}z_{p+1}\dots,z_{q}$. We then observe that for any $i \in \{0,1,\dots,2\ell\}$, $\IsAccept_L(z_{0:n-i})$ is strictly smaller than $\IsAccept_{L'}(z_{0:n-i})$; this is because there exists a suffix $y=z_{n-i+1:n}$ having length $|y| \in \{0,\dots,2\ell\}$, for which $z_{0:n-i}y \notin L$ but $z_{0:n-i}y \in L'$ (note that $z_{0:n-i}y$ is simply equal to $z$). This implies that \\ $d\left(\trace_{c_L}(x), \trace_{c_{L'}}(x)\right) > 2\ell$.

    \paragraph{Subcase 2b: $z \notin \Pref(L)$.} In this case, let $w \in \Sigma^*$ be the \textit{maximal} prefix of $z$ that satisfies $w \in \Pref(L)$, and let $z=wat$, where $a \in \Sigma$, and $t \in \Sigma^*$. Here, by maximal, we mean that $wa \notin \Pref(L)$. We first claim that $|w| \ge 2\ell$. To see this, suppose that $|w| < 2\ell$. In this case, $|wa| \le 2\ell$. But it also holds that $wa \notin \Pref(L)$, and $wa \in \Pref(L')$ (since $z=wat \in L'$). This contradicts \eqref{eqn:stronger-property-2}. Thus, $|w| \ge 2\ell$.

    So, let $w=w_0w_1,\dots,w_n$, where $w_0=\eps$. We now argue that for every $i \in \{0,1,\dots,2\ell\}$, $\NextLive_L(w_{0:n-i})$ is strictly smaller than $\NextLive_{L'}(w_{0:n-i})$. Again, this holds because there exists a suffix $y=w_{n-i+1:n}a$ having length $|y| \in \{1,\dots,2\ell+1\}$, for which $w_{0:n-i}y \notin \Pref(L)$ but $w_{0:n-i}y \notin \Pref(L')$ (note that $w_{0:n-i}y$ is simply equal to $wa$). This then implies that \\ $d\left(\trace_{c_L}(x), \trace_{c_{L'}}(x)\right) > 2\ell$.

    In both cases, we have shown the existence of an $x \in L$ satisfying $d\left(\trace_{c_L}(x), \trace_{c_{L'}}(x)\right) > 2\ell$, as required.
\end{proof}

\section{Traces for Regular Languages}
\label{sec:state-traces-appendix}

We give a formal definition of a DFA following the textbooks \citet{sipser1996introduction,hopcroft2001introduction}:

\begin{definition}[Deterministic Finite-State Automaton (DFA]
    \label{def:dfa}
    A DFA $M$ over a finite alphabet $\Sigma$ is a 5-tuple $M = (Q(M), \Sigma, \delta_M, q_0(M), F(M))$, where $Q(M)$ is a finite set of states, $\delta_M : Q(M) \times \Sigma \to Q(M)$ is a state transition function, $q_0(M) \in Q(M)$ is the initial state, and $F(M) \subseteq Q(M)$ is the set of accepting states.  The regular language $L(M)$ that is accepted by the DFA $M$ is defined as the set $L(M)=\{x \in \Sigma^*:\delta_M(q_0(M),x) \in F(M)\}$.
\end{definition}
We can extend the domain of the transition function $\delta_M$ from $\Sigma$ to $\Sigma^*$ by defining $\delta_M(q,\eps)=q$, and recursively defining $\delta_M(q, a y) = \delta_M(\delta_M(q, a), y)$, for every $q \in Q(M), a \in \Sigma$ and $y \in \Sigma^*$.

\subsection{Corrupted State Traces}
\label{sec:corrupted-state-traces-appendix}

As we did in \Cref{sec:state-machines}, we describe how we can derive state coloring functions that are robust to corruptions in state traces from the trace coloring functions constructed in \Cref{lemma:corruptions-coloring-upper-bound} above. Consider any DFA $M$ in a given collection $\mcC=\{M_1,M_2,\dots\}$, and fix $\ell > 0$. For any $q \in Q(M)$, define 

\begin{align}
    &\IsAccept_M(q) := |\{y \in \Sigma^{\{0,\dots,2\ell\}}: \delta_M(q,y) \in F(M)\}|, \label{eqn:IsAccept-dfa-corruptions-def}\\
    &\NextLive_M(q) := |\{y \in \Sigma^{\{1,\dots,2\ell+1\}}: \delta_M(q,y) \text{ is live}\}|, \label{eqn:NextLive-dfa-corruptions-def} \\
    &\ShortAccept_M := |\{y \in \Sigma^{\{0,\dots,2\ell-1\}}: \delta_M(q_0(M),y) \in F(M)\}|, \label{eqn:ShortAccept-dfa-corruptions-def} \\
    &\ShortPrefix_M := |\{y \in \Sigma^{\{0,\dots,2\ell\}}: \delta_M(q_0(M),y) \text{ is live}\}|. \label{eqn:ShortPrefix-dfa-corruptions-def}
\end{align}
Note that $\ShortAccept_M$ and $\ShortPrefix_M$ are global properties of the DFA $M$.

Now fix the palette $\palette=\{1,2,\dots,B\}$, where $B=\max(8\ell+4, 8k^{2\ell+1})$, and define the state coloring function $c_M:Q(M) \to \palette$ as follows:
\begin{align}
    \label{eqn:next-live-state-coloring-function-corruptions-def}
    c_M(q) = \begin{cases}
        \IsAccept_M(q) + \NextLive_M(q) + \ShortAccept_M 
        + \ShortPrefix_M & \text{if $q$ is live},\\
        1 & \text{otherwise.} \\
    \end{cases}
\end{align}
For any $x = x_0x_1\dots x_n \in L(M)$ (where $x_0=\eps$), which traverses the states $q_0,q_1,\dots,q_{n}$ in $M$, we have that
\begin{align*}
    c_{M}(q_i) = c_{L(M)}(x_{0}x_1\dots x_i),
\end{align*}
where $c_{L(M)}$ is exactly the trace coloring function defined in \eqref{eqn:next-live-trace-coloring-function-corruptions-def} in the proof of \Cref{lemma:corruptions-coloring-upper-bound}. %
Thus, we have that for any $M,M' \in \mcC$ satisfying $L(M) \subsetneq L(M')$, there exists an $x \in L(M)$ for which $d\left(\trace_{c_M}(x), \trace_{c_{M'}}(x)\right) > 2\ell$, and we can identify $\mcC$ in the limit using state color traces having at most $\ell$ corruptions.

Analogous to \Cref{lemma:state-coloring-optimal-bound}, we next show that the palette size used above is optimal in order for state coloring functions in general.

\begin{claim}[$k^{2\ell-1}$ Colors Necessary for Corrupted State Traces]
    \label{claim:k^ell-lower-bound-dfas-corruptions}
    For any $k \ge 1$ and $\ell > 0$, there exists a finite collection $\mcC$ of DFAs over the alphabet $\Sigma=\{1,2,\dots,k\}$, such that any family of state coloring functions $\{c_M\}_{M \in \mcC}$ which satisfies: for every pair of DFAs $M, M' \in \mcC$ satisfying that $L(M) \subsetneq L(M')$, there exists $x \in L(M)$ for which $d\left(\trace_{c_M}(x),\trace_{c_{M'}}(x)\right) > 2\ell$, necessarily requires a palette of size $k^{2\ell-1}$.
\end{claim}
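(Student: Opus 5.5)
The plan is to imitate the lower-bound half of \Cref{lemma:state-coloring-optimal-bound}. There, a chain $L(M_0)\subsetneq\dots\subsetneq L(M_k)$ forced pairwise distinct colors because every accepted string of $M_i$ loops on one state, so each trace is constant. With corruptions, a single position of disagreement is no longer enough: we need $d(\trace_{c_M}(x),\trace_{c_{M'}}(x))>2\ell$. The idea is to use this stronger requirement to force pairwise distinctness on many automata at once. We build a long chain (or a family of pairwise comparable pairs) of DFAs whose accepted strings have very rigid traces. The Hamming requirement on a string of length about $2\ell$ then says that the relevant colors of the two automata must differ at essentially every position.

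Concretely, I would take $m=2\ell-1$ and index automata by words $u\in\Sigma^{m}$, together with one common ``top'' automaton. Each $M_u$ has a trie of depth $m$ on $u$'s letters, followed by an accepting self-looping state $p_u$ with a rejecting sink elsewhere. So $L(M_u)=u\,\Sigma^*$, or, to get nesting, a suitably chosen shifted variant. The first step is to arrange the languages so that the comparability relation $L(M_u)\subsetneq L(M_v)$ holds between enough pairs. The natural candidates are languages of the form $\Sigma^{\le j}u\Sigma^*$-type constructions, or a single chain ordered lexicographically, built so that every $x\in L(M_u)$ spends all but at most $2\ell-1$ of its positions at the looping state $p_u$.

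The second step is the counting. A witness $x$ must differ from the larger automaton's trace in at least $2\ell+1$ positions. At most $m=2\ell-1$ of those positions lie in the trie part, so the looping-state colors must differ: $c_{M_u}(p_u)\neq c_{M_v}(p_v)$. Since the argument does not care how the trie states are colored, this gives pairwise distinct loop colors across the whole comparable family. That yields $|\palette|\ge k^{m}=k^{2\ell-1}$, provided the family indexed by $\Sigma^{m}$ is pairwise comparable, as with a chain.

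The main obstacle is the first step. I need $k^{2\ell-1}$ pairwise comparable regular languages in which every accepted string sits on a single state for all but $2\ell-1$ positions. Single-live-state languages $\Sigma'^*$ only give chains of length $k+1$, which is exactly why the earlier bound is $k+1$. To get around this, I would first try letting the distinguishing prefix $u$ be read at the \emph{end} rather than the beginning. For example, take $L_u=\{x: \text{the last } m \text{ letters of } x \text{ are} \le_{\mathrm{lex}} u\}\cup\Sigma^{<m}$, so that the $L_u$ form a chain of length $k^m$. I would then implement each $L_u$ by a DFA that loops on one accepting state until the final $m$ letters. This does not quite work as stated, since a DFA cannot know when the end is coming. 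The fallback is to choose the automata, as the claim permits, so that the states visited by witnesses are forced, and to run the pigeonhole over tuples of at most $2\ell-1$ ``free'' positions. This is the step I expect to require the most care.
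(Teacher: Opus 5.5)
There is a genuine gap, and it is in the step you flag yourself: you never construct the chain. Your counting step is correct and is how the paper finishes. Suppose every $x \in L(M_i)$, run through both $M_i$ and a larger $M_j$, sits on a single looping state at all but at most $2\ell$ positions of its trace. Then $d(\trace_{c_{M_i}}(x),\trace_{c_{M_j}}(x))>2\ell$ forces a disagreement inside the two constant blocks, so the loop colors must differ. A chain of length $k^{2\ell-1}$ then forces that many colors. The candidates you actually pursue do not supply such a chain. The suffix-based chain fails for the reason you note: a DFA for ``the last $m$ letters are $\le_{\mathrm{lex}} u$'' must keep tracking a sliding window, so accepted strings never settle on one state. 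The ``fallback'' is not an argument. Your observation that single-live-state languages only give chains of length $k+1$ is true but beside the point. The looping state need not be the only live state, because the trie states are ``free'': the Hamming budget absorbs them.

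The missing idea is to take unions of your own cylinders $u\Sigma^*$ along an ordering, keeping $u$ at the front. Enumerate $\Sigma^{2\ell-1}$ as $s_1,\dots,s_N$ with $N=k^{2\ell-1}$, and set $L(M_i)=\{s_1,\dots,s_i\}\Sigma^*$. These languages are strictly nested, since $s_{i+1}\in L(M_{i+1})\setminus L(M_i)$. Each $M_i$ is a trie over $\{s_1,\dots,s_i\}$ in which every completed branch leads into one shared accepting sink $q_F(M_i)$, and every other branch goes to a rejecting sink. Now take $i<j$ and any $x\in L(M_i)$. The first $2\ell-1$ letters of $x$ lie in $\{s_1,\dots,s_i\}\subseteq\{s_1,\dots,s_j\}$. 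So $x$ passes through the trie and then stays at the accepting sink in both $M_i$ and $M_j$. At most $2\ell$ positions of the two traces lie outside the constant sink blocks. Your counting then gives $c_{M_i}(q_F(M_i))\neq c_{M_j}(q_F(M_j))$ for all $i<j$, which requires $N$ colors. This is precisely the paper's construction, and with it your proof is complete.
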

\begin{proof}
    Let $N=k^{2\ell-1}$, and $S=\{s_1,\dots,s_N\}$ be an ordering of all strings of size $2\ell-1$ in $\Sigma^{2\ell-1}$. Consider the collection $\mcC=\{M_1,\dots,M_N\}$, where each $M_i$ accepts the regular language
    \begin{align}
        L(M_i) = \{wy: w \in \{s_1,\dots,s_i\}, y \in \Sigma^*\}.
    \end{align}
    Namely, $L(M_i)$ has strings of the form $wy$, where $w$ is any of the first $i$ strings in $S$, and $y$ is any suffix in $\Sigma^*$. We have that $L(M_1) \subsetneq L(M_2) \subsetneq \dots \subsetneq L(M_N)$.
    
    Now observe that each $M_i$ can be specified by a DFA that first builds a prefix tree over the set $\{s_1,\dots,s_i\}$, followed by an accepting sink state. Namely, for any $x$, the prefix tree part checks whether the first $2\ell-1$ characters of $x$ belong to $\{s_1,\dots,s_i\}$---if they do, then the string is accepted. Let $q_0(M_i)$ be the initial state of $M_i$ (i.e., the root of the prefix tree), and let $q_F(M_i)$ be the single accepting sink state. Note then that any $x \in L(M_i)$ traverses a sequence of states of the form 
    \begin{align}
        \label{eqn:sequence-of-states-characterization-corruption-dfa-lb}
        \underbrace{q_{0}(M_i) \to q_{j_1}(M_i) \to \dots \to q_{j_{2\ell-1}}(M_i)}_{\text{prefix tree}} \to \underbrace{q_F(M_i) \to \dots \to q_F(M_i)}_{\text{loop on accepting sink state}}.
    \end{align}
    
    Now, fix any state coloring functions $c_{M_1},\dots,c_{M_N}$ that use a palette $P$ of size $N-1$, and satisfy that for every $M_i, M_j$ with $L(M_i) \subsetneq L(M_j)$, there exists $x \in L(M_i)$ for which \\$d\left(\trace_{c_{M_i}}(x), \trace_{c_{M_j}}(x)\right) > 2\ell$.

    Fix any $i,j \in \{1,\dots,N\}$ satisfying $i < j$. For any $x \in L(M_i)$, by the characterization \eqref{eqn:sequence-of-states-characterization-corruption-dfa-lb} of the sequence of states traversed by $x$ in $M_i$, we get that $\trace_{c_{M_i}}(x)$ is some sequence in $P^{2\ell}$, followed by the sequence $\underbrace{(c_{M_i}(q_F(M_i)),\dots,c_{M_i}(q_F(M_i)))}_{|x|-2\ell+1 \text{ times}}$. Similarly, we have that $\trace_{c_{M_j}}(x)$ is some sequence in $P^{2\ell}$, followed by the sequence $\underbrace{c_{M_j}(q_F(M_j),\dots,c_{M_j}(q_F(M_j)))}_{|x|-2\ell+1 \text{ times}}$. Thus, in order for $d\left(\trace_{c_{M_i}}(x), \trace_{c_{M_j}}(x)\right) > 2\ell$ to hold, we necessarily require that $c_{M_i}(q_F(M_i)) \neq c_{M_j}(q_F(M_j))$. But this is not possible for every $i < j$, if the state coloring functions use only $N-1$ colors.
\end{proof}

\subsection{Identifying Regular Languages with Two Colors}
\label{sec:regular-languages-two-colors}

\begin{lemma}[Increasing Live, Reachable States in DFA]
    \label{lemma:increasing-live-states}
    Let $M$ be a DFA satisfying $|L(M)|=\infty$, and let $M$ have $n$ states that are live and reachable. Then, for any $k > 0$, there exists a DFA $M'$ satisfying $L(M')=L(M)$, such that $M'$ has $n+k$ states that are live and reachable.
\end{lemma}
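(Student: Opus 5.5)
The plan is to prove the case $k=1$ and then iterate. Applying the $k=1$ case to $M$ gives an equivalent DFA with exactly $n+1$ live, reachable states. Its language is still $L(M)$, hence still infinite, so the $k=1$ case applies again. After $k$ applications we obtain exactly $n+k$ such states. So the whole task is to add \emph{exactly one} new live, reachable state while preserving the language and without destroying the reachability or liveness of any existing state.

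\textbf{Finding a cycle.} Let $m=|Q(M)|$. Since $|L(M)|=\infty$, there is some $x\in L(M)$ with $|x|\ge m$. Its run $q_0,q_1,\dots,q_{|x|}$ visits $|x|+1>m$ states, so some state repeats. Every state on this run is reachable (via a prefix of $x$) and live (via the remaining suffix of $x$, since the run ends in $F(M)$). Let $i$ be the \emph{smallest} index such that $q_i=q_j$ for some $j>i$, and fix such a $j$. Write $q:=q_i$, $r:=q_{j-1}$ and $a:=x_j$, so that $\delta_M(r,a)=q$. Here $r$ may equal $q$.

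\textbf{Construction.} Define $M'$ by adding one fresh state $q'$. We set $q'\in F(M')$ iff $q\in F(M)$, and $\delta_{M'}(q',b):=\delta_M(q,b)$ for every $b\in\Sigma$. We then redirect the single transition $\delta_{M'}(r,a):=q'$, leaving all other transitions unchanged.

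\textbf{Invariants.} We would check three things.
\begin{itemize}
\item[(i)] The set of strings accepted from each state is unchanged, and from $q'$ it equals that from $q$. This holds because $q'$ is an exact copy of $q$ in acceptance and outgoing behaviour, so a run of $M'$ is a run of $M$ with some occurrences of $q$ relabelled $q'$. In particular $L(M')=L(M)$, every state keeps its live/dead status, and $q'$ is live.
\item[(ii)] Every state reachable in $M$ stays reachable in $M'$. Given a path in $M$, replace each use of the edge $r\xrightarrow{a}q$ by $r\xrightarrow{a}q'$. Since $q'$ has the same outgoing transitions as $q$, the path continues to the same endpoint, unless that endpoint is $q$ itself.
\item[(iii)] $q$ itself stays reachable, via the prefix $x_{\le i}$. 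This is where the minimality of $i$ is used. If some step $t\le i$ of this prefix used the redirected edge, then $q_{t-1}=r=q_{j-1}$ with $t-1<i\le j-1$. So the state $q_{t-1}$ reappears later, contradicting the choice of $i$.
\end{itemize}
Finally, $q'$ is reachable: from $q_0$, follow $x_{\le i}$ to reach $q$, then traverse the cycle from $q$ to $r$ (routing through $q'$ where the redirected edge would be used), then read $a$. Hence the live, reachable states of $M'$ are exactly those of $M$ together with $q'$, giving exactly $n+1$.

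\textbf{Main obstacle.} The delicate point is the exactness of the count, and specifically avoiding the loss of the reachability of $q$ once an incoming edge is redirected. The minimal-first-repetition choice of $i$ is designed to handle this. The copy construction simultaneously guarantees that no state's liveness changes.
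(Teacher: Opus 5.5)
Your proposal is correct and is essentially the paper's proof: in both, you add a copy of a state that repeats on the accepting run of a long string, redirect the cycle-closing transition into the copy, and iterate $k$ times. The only differences are minor. You choose the smallest index $i$ whose state recurs later, whereas the paper takes the first index $s$ at which a state repeats (so $q_0,\dots,q_{s-1}$ are distinct); either choice ensures the redirected edge is not used on the way to the repeated state. One tidy-up: for the reachability of $q'$, it is cleaner to note that $r$ stays reachable in $M'$ by (ii) or (iii) and that $\delta_{M'}(r,a)=q'$. The reason is that when $r=q$ and $j$ is not minimal, literally reading $x_{i+1}\cdots x_{j-1}$ from $q$ can end at $q'$ rather than at $r$, and then reading $a$ leads back to $q$.
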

\begin{proof}
    Since $L(M)$ is infinite, it must contain strings of arbitrarily large length. Let $x=x_1x_2\dots x_t$ be any string in $L(M)$ having length $t \ge n$. Then, consider the sequence of states $q_0,q_1,\dots,q_t$ traversed by $x$ in $M$: note that every state in this sequence is live and reachable. Then, since there are only $n$ live and reachable states in $M$, and the number of states in this sequence is $t+1 > n$, by the pigeonhole principle, there exists at least one live and reachable state that appears twice in the sequence. Let $s$ be the smallest index for which there exists $i < s$ such that $q_i = q_s$ (i.e., this is the first time a state got repeated in the sequence), and consider the intermediate sequence of transitions $q_i \to q_{i+1} \to \dots \to q_{s-1} \to q_s$, together with the symbols $x_{i+1},\dots,x_{s-1},x_s$ that effected these transitions. Note that by the choice of $s$, the states $q_0,\dots,q_{s-1}$ are all distinct. %
    
    Now, consider a DFA $M'$ that operates on the same alphabet $\Sigma$. $M'$ has a copy $q'$ of every state $q$ in $M$, plus an additional state $q'_{\New}$. The state transitions for all the states in $M'$ are identical to their copies in $M$, except for the states $q'_{s-1}$ and $q'_{\New}$, where $q_{s-1}$ is the copy of the state $q_{s-1}$, and $q'_{\New}$ is the additional new state. The state transitions for $q'_{s-1}$ and $q_{\New}$ are as follows. First, we make the transitions out of $q'_\New$ to be identical to those out of $q_s$, i.e., set $\delta_{M'}(q'_{\New}, a)$ to be the copy of $\delta_{M}(q_s, a)$ for every $a \in \Sigma$. Next, we redirect the single transition $q'_{s-1} \xrightarrow{x_s} q'_s$ to be $q'_{s-1} \xrightarrow{x_s} q'_\New$ instead; namely, for any $a \neq x_s$, we set $\delta_{M'}(q'_{s-1}, a)$ to be the copy of the state $\delta_{M}(q_{s-1}, a)$, and we make $q'_{\New}$ reachable by setting $\delta_{M'}(q'_{s-1}, x_s)=q'_{\New}$. In essence, we have redirected the transition from $q'_{s-1}$ which was looping back into $q'_s$ to the new state $q'_{\New}$, whose outward transitions are then identical to $q_s$. The initial state in $M'$ is set to the copy of the initial state in $M$, and the accepting states in $M'$ are set to the copies of the accepting states in $M$, \textit{plus} the state $q'_{\New}$ if $q_s$ was an accepting state. This completes the specification of $M'$. 
    
    We now argue that $M'$ has $n+1$ live and reachable states, and that $L(M')=L(M)$; the lemma is then established by inducting this $k$ times. First, it is clear that for any state $q$ in $M$ that was either not live or reachable in $M$, its copy $q'$ has the same status in $M'$. Now, by construction, the newly added state $q'_{\New}$ and the state $q'_s$ are both live and reachable, as witnessed by the traversal of $x$ through $M'$, which passes through both $q'_s$ and $q'_\New$ on the way to an accepting state. %
    Finally, for every state $q \neq q_s$ that was live and reachable in $M$, its copy $q'$ remains live and reachable in $M'$. This can be seen by tracing out the sequence of states traversed by any $y \in L(M)$ that passed through $q$ on the way to an accepting state in $M$. In its traversal through $M'$, the only notable change is that a transition of the form $q_{s-1} \xrightarrow{x_s} q_s$ gets replaced by $q'_{s-1} \xrightarrow{x_s} q'_{\New}$. But since the transitions out of $q'_\New$ are identical to $q_s$, $y$ continues on its path to an accepting state, and any occurrences of $q$ in the traversal through $M$ overlap with occurrences of $q'$ in the traversal through $M'$; thus, $q'$ remains live and reachable in $M'$. Thus, we have argued that $M'$ has $n+1$ live and reachable states.

    It remains to argue that $L(M')=L(M)$. Again, this is made clear by tracing out the traversal of any $x$ through $M$ and $M'$: if the traversal never uses the single redirected transition $q'_{s-1} \xrightarrow{x_s} q'_{\New}$, then the end outcomes are clearly identical; if it does use this transition, then in $M$, the transition leads into the state $q_s$, whereas in $M'$, the transition leads into the state $q'_\New$; however, since the transitions out of $q'_\New$ and $q_s$ are identical, the end outcome is identical as well.
\end{proof}